%% file: main.tex
\documentclass{article}

\usepackage[preprint]{neurips_2026}

\usepackage[utf8]{inputenc} 
\usepackage[T1]{fontenc}    
\usepackage{booktabs}       
\usepackage{amsfonts}       
\usepackage{nicefrac}       
\usepackage{microtype}      
\usepackage{xcolor}         

\title{Cross-Fitting for Neural Posterior Estimation}

\input{math_commands.tex}

\usepackage{array}
\usepackage{amsthm}
\usepackage{thm-restate}
\usepackage{graphicx}
\usepackage{algorithm}
\usepackage{algorithmic}
\usepackage{float}
\usepackage{hyperref}
\usepackage[capitalize,noabbrev]{cleveref}
\crefformat{section}{\S#2#1#3}
\crefformat{subsection}{\S#2#1#3}

\author{%
Jeffrey Regier\\
Department of Statistics\\
University of Michigan\\
Ann Arbor, MI 48109 \\
\texttt{regier@umich.edu}
}

\begin{document}

\maketitle

\begin{abstract}
Neural Posterior Estimation (NPE) fits an amortized conditional density estimator to a simulated training set of complete-data realizations.
NPE's training objective, log-likelihood, favors posterior approximations that assign arbitrarily high density to the training points, and typical families of amortized conditional density estimators (e.g., normalizing flows, Gaussian encoders) are flexible enough to oblige.
The resulting policy of extreme overconfidence translates poorly to held-out data.
Early stopping and other regularizers can prevent this degeneracy, but they are limited to selecting posterior approximations along an unfavorable optimization trajectory set by in-sample residuals, where the posterior mean is underfit while the posterior variance is overfit.
To avoid this trade-off, we propose Cross-Fitting NPE (CF-NPE), which uses sample splitting to separate conditional-mean estimation from the estimation of the posterior's higher moments.
Using $K$-fold cross-fitting, CF-NPE fits $K$ posterior-mean regressors and then fits a conditional density estimator to the resulting out-of-fold residuals.
In experiments with synthetic data and in a cosmological application, CF-NPE achieves higher held-out log-likelihood than standard NPE and typically improves calibration.
CF-NPE is most useful in high-dimensional, simulation-limited settings.
\end{abstract}

\section{Introduction}

Neural Posterior Estimation (NPE) is a widely used method of simulation-based inference~\citep{papamakarios2016fast,greenberg2019automatic}. 
Given a prior $p(\rvz)$ and a simulator $p(\rvx\mid\rvz)$ from which samples can be drawn, NPE trains an amortized conditional density estimator
$q_\phi(\rvz\mid\rvx)$ with parameters $\phi$ to approximate the posterior.
During training, NPE maximizes the log-likelihood:
\begin{equation*}
\hat{\Ls}(\phi)
= \frac{1}{N}\sum_{i=1}^N \log q_\phi(\rvz_i \mid \rvx_i),
\end{equation*}
where $\{(\rvx_i, \rvz_i)\}_{i=1}^N$ are independent samples from the joint distribution $p(\rvx, \rvz)$.

The amortized conditional density estimator $q_\phi$ is often taken to be 
a Gaussian encoder or a normalizing flow~\citep{papamakarios2021normalizing}.
The approximators typically used, including these two, are sufficiently flexible that, without regularity conditions on the approximating family $\mathcal{Q} = \{q_\phi\}$, the NPE training objective is unbounded above: at each training input $\rvx_i$, the fitted conditional density can place arbitrarily high density near $\rvz_i$, making the empirical log-likelihood arbitrarily large.
Thus, very different predicted variances can achieve the same training log-likelihood, and the posterior approximation selected is determined by the regularity conditions.

The necessity of regularizers to prevent degeneracy is not unique to NPE.
The unbounded-likelihood problem for densities concentrated on observations is well known \citep[Sec.~6]{kiefer1956consistency}.
In regression, without regularity conditions, the training data can be perfectly interpolated by regressors that perform arbitrarily poorly on held-out data.

However, for regression with neural networks, the network architecture and the optimization path of stochastic first-order optimizers act as powerful regularizers; overfitting by neural networks is ``benign'' or ``tempered'' rather than ``catastrophic,'' in the taxonomy of \citet{mallinar2022benign}.
Early stopping is still useful, but it is not what stands in the way of catastrophic validation performance.

Neural conditional density estimation is different.
In NPE, overfitting is better described as catastrophic than as benign or tempered.
With a Gaussian encoder, for example, finite network capacity does not prevent setting standard deviations to the smallest positive value that floating-point numbers can represent.
Early stopping is left to prevent degeneracy, and thus to set the scale of the posterior approximation.

But will early stopping recover the correct posterior scale?
Early stopping effectively performs a one-dimensional search along the optimizer's trajectory, identifying the epoch with the best validation-set performance.
Given that the search space is restricted to this trajectory, this is a reasonable policy: maximizing validation-set log-likelihood minimizes the KL divergence from the posterior to its approximation (that is, the forward KL divergence), averaged over simulated data realizations \citep{ambrogioni2019forward}.

However, the optimizer's trajectory is an unfortunate space to search for a good approximation of the posterior.
This trajectory is produced by fitting the posterior mean and scale simultaneously to the same training pairs.
Because fitting the mean reduces its in-sample residuals, those residuals underestimate held-out residual error for the fitted mean.
As the mean function comes to perfectly interpolate the data points, the residuals, and therefore the fitted variance, approach zero.

\begin{figure*}[!tb]
\centering
\includegraphics[width=\textwidth]{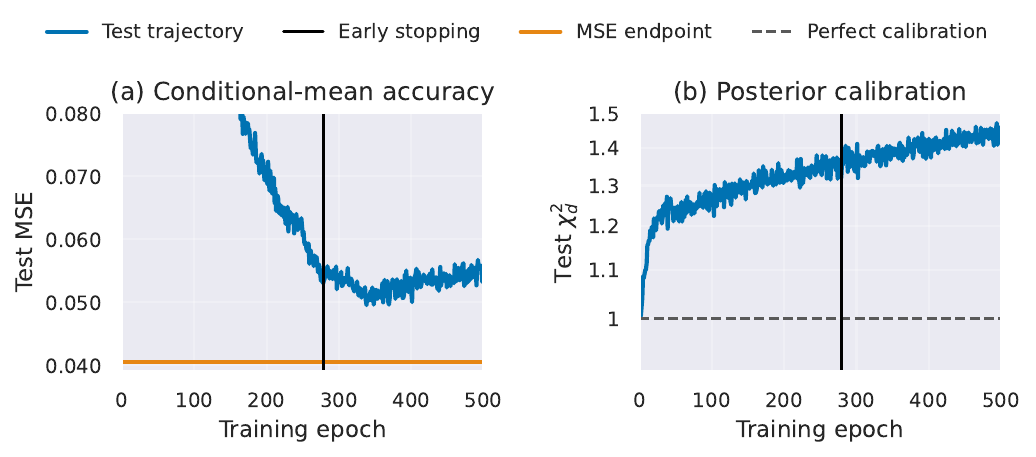}
\caption{Raw per-epoch training trajectories for standard NPE
(three width-256 hidden layers) on the Gaussian--Gaussian problem ($d=d_\rvx=200$,
$N=10{,}000$), shown for one seed. Left:
held-out mean squared error (MSE) to the analytic posterior mean. Right:
$\chi^2_d(q;P)$ over a test set, with the expectation over $\rvz\mid \rvx$
evaluated analytically and averaged over 10,000 fixed test inputs.}
\label{fig:variance_trajectory}
\end{figure*}

\Cref{fig:variance_trajectory} illustrates the shortcomings of this search space for a particular NPE training run.
We quantify posterior calibration with the reduced chi-square statistic:
\begin{equation}
\label{eq:reduced_chi2}
\chi^2_d(q;P)
:=\E_P\!\left[
\frac{1}{d}\sum_{\ell=1}^{d}
\frac{(\ervz_\ell-\vmu_{q,\ell}(\rvx))^2}{\sigma_{q,\ell}^2(\rvx)}
\right],
\end{equation}
where $d$ is the dimension of the latent space, $q$ is the fitted posterior approximation with conditional mean $\vmu_q(\rvx):=\E_q[\rvz\mid \rvx]$ and coordinate-wise variances $\sigma_{q,\ell}^2(\rvx):=\Var_q(\ervz_\ell\mid \rvx)$, and $P$ is a generic distribution over pairs $(\rvx,\rvz)$.
The reduced chi-square equals one when the fitted variances match the squared errors about the fitted mean under $P$.
Here we take $P$ to be the empirical distribution of a held-out test set.

In \Cref{fig:variance_trajectory}, posterior mean estimation improves for 337 epochs before overfitting dominates (left panel), whereas posterior calibration steadily worsens across epochs (right panel), as the posterior spread is trained in-sample with increasingly overfit means.
Early stopping (based on validation log-likelihood) trades off between mean estimation and calibration, selecting epoch 278, which is optimal for both validation- and test-set log-likelihood given the search space.
However, the MSE endpoint line (left panel) shows that substantially better posterior mean estimation is possible.
The MSE endpoint is the held-out MSE of a network with a similar architecture trained to minimize MSE.
It is lower than the NPE trajectory's MSE not only at epoch 278 but also at its minimum, epoch 337.
The MSE endpoint benefits from benign overfitting, with early stopping responding solely to the quality of conditional mean estimation. 

To avoid the trade-off NPE faces between the accurate estimation of the posterior's first and second moments, we propose Cross-Fitting NPE (CF-NPE).
CF-NPE fits the conditional mean and residual density in separate stages using $K$-fold cross-fitting (\Cref{sec:method}).
Stage~1 partitions the data into $K$ folds and trains $K$ posterior-mean regressors, each with a different fold held out. 
These regressors produce out-of-fold residuals, and stage~2 fits the conditional distribution of these residuals.

On three high-dimensional synthetic inference problems,
CF-NPE achieves higher test log-likelihood than standard NPE
(\Cref{sec:experiments}). Its posterior calibration, assessed by $\chi^2_d$, is within a few percent of one in every case except one, where it is 1.15, whereas standard NPE's is as high as 1.51.
On a strong-gravitational-lensing source-reconstruction problem, CF-NPE
achieves higher test log-likelihood than standard NPE and compute-matched standard NPE mixtures (\Cref{sec:strong_lensing}).

CF-NPE is most beneficial when the simulator is computationally expensive and the latent space dimension is high.
If the simulator is cheap enough to draw fresh pairs at every optimizer step, the deficiency of standard NPE motivating this work is not exercised (\Cref{sec:discussion}).

\section{Background}
\label{sec:background}

\paragraph{Sample splitting and cross-fitting.}
The jackknife \citep{quenouille1949approximate,efron1982jackknife} recomputes a statistic on samples that omit
each observation in turn.
$K$-fold cross-fitting is a related procedure that omits one fold at a time: a first-stage model is
trained on $K-1$ folds and used to predict on the held-out fold, producing out-of-fold predictions for every observation that serve as inputs to a second-stage estimator.
Each out-of-fold prediction is computed without the target observation, removing the same-pair dependence between stages. In econometrics, double machine learning~\citep{chernozhukov2018double} combines
cross-fitting with estimating equations for observational causal inference.

\paragraph{Deep ensembles.}
Like us, \citet{hermans2022crisis} observe that NPE and other simulation-based inference methods can produce overconfident posterior approximations. However, \citet{hermans2022crisis} do not trace this problem to in-sample residuals. Instead, they propose that the deep ensembles method of \citet{lakshminarayanan2017simple}, developed for probabilistic prediction more generally, can correct NPE overconfidence.
In deep ensembles, the final posterior approximation is a uniform mixture of component posterior approximations, each fitted independently using the NPE objective with different random weight initializations.
The final posterior approximation is typically more dispersed than its components, so deep ensembles do indeed counteract some underdispersion, but not in a calibrated way.
For example, mixing posterior approximations can improve posterior mean estimates without contracting posterior variances accordingly, resulting in underconfidence.

\paragraph{Heteroscedastic regression.}
When paired with a Gaussian encoder, NPE essentially performs heteroscedastic regression \citep{nix1994estimating} on the simulated training pairs.
Recent methods for heteroscedastic regression with neural networks modify the joint loss or its gradient updates to avoid poor mean or variance estimates through alternating updates \citep{detlefsen2019reliable}, loss reweighting \citep[$\beta$-NLL;][]{seitzer2022pitfalls},
or stop-gradient operations~\citep{stirn2023faithful}.
These methods alter the training loss but still estimate means and scales from the same training pairs.

\section{Method}
\label{sec:method}
To fit an amortized posterior approximation, CF-NPE first partitions the simulated training data $\train = \{(\rvx_i, \rvz_i)\}_{i=1}^N$ uniformly at random into $K$ equal-sized folds.
For each training pair $i = 1, \ldots, N$, let $k(i)$ denote the fold containing training pair $i$.
For each fold $k = 1, \ldots, K$, let $\train_{k} := \{(\rvx_i, \rvz_i) : k(i) = k\}$ and let $\train_{-k} := \train \setminus \train_k$.
CF-NPE then fits an amortized posterior approximation in two stages.

\paragraph{Stage 1 training.}
For each fold $k = 1, \ldots, K$, stage~1 trains a neural network regressor $\hat\vmu_{-k}(\cdot)$ to predict the posterior mean of $\rvz \in \R^{d}$ for any $\rvx \in \R^{d_\rvx}$.
The training objective uses the data subset $\train_{-k}$ and MSE loss, which is asymptotically minimized by the conditional expectation.
Each regressor $\hat\vmu_{-k}$ is either a single network ($M=1$) or the average of an ensemble of $M>1$ networks, all trained on $\train_{-k}$ but with different random initializations of their weights and biases.
The $KM$ stage-1 networks may be trained sequentially or in parallel.

\paragraph{Stage 2 training.}
For each training pair $i = 1, \ldots, N$, let the out-of-fold residual $\rvr_i = \rvz_i - \hat\vmu_{-k(i)}(\rvx_i)$.
Stage~2 fits a conditional density estimator using these out-of-fold residuals.
The residual density $q_\psi$ is parameterized by a neural network with weights $\psi \subset \phi$.
It is trained to maximize
\begin{equation}
\label{eq:stage2_objective}
\Ls := \frac{1}{N} \sum_{i = 1}^N \log q_\psi\left(\rvr_i \mid \rvx_i, \hat\vmu_{-k(i)}(\rvx_i)\right).
\end{equation}
The residual density $q_\psi$ can be a mean-zero diagonal Gaussian, a normalizing flow, or any other conditional density family.

Alternatively, in a variant of CF-NPE that we call \emph{CF-NPE-mixture}, we repeat this stage-2 training procedure $K$ times to produce $\psi_1, \ldots, \psi_K$, each with a different weight initialization and a different validation fold.

\paragraph{Inference.}
At inference time, CF-NPE pairs the stage-2 residual density estimator with the mean regressor of an arbitrary fold, which we take to be fold~1 for concreteness:
\begin{equation}
\label{eq:cf_npe_base}
q_\phi(\rvz \mid \rvx) = q_\psi\left(\rvz - \hat{\vmu}_{-1}(\rvx) \mid \rvx,\, \hat{\vmu}_{-1}(\rvx) \right).
\end{equation}
Here $\phi$ collects the stage-1 and stage-2 parameters.
In CF-NPE-mixture, a uniform mixture is used instead:
\begin{equation}
\label{eq:cf_npe_offset_ensemble}
q_\phi(\rvz \mid \rvx) = \frac{1}{K}\sum_{k=1}^{K}
  q_{\psi_k}\left(\rvz - \hat{\vmu}_{-k}(\rvx) \mid \rvx,\, \hat{\vmu}_{-k}(\rvx)\right).
\end{equation}

\begin{algorithm}[t]
\caption{CF-NPE and CF-NPE-mixture.}
\label{alg:cf_npe}
\begin{algorithmic}[1]
\REQUIRE Training pairs $\train=\{(\rvx_i,\rvz_i)\}_{i=1}^N$; number of folds $K$;
  stage-1 replicates $M$
\STATE Partition $\train$ uniformly at random into equal folds
  $\train_1,\ldots,\train_K$
\FOR{$k=1,\ldots,K$}
  \STATE Fit $M$ mean networks on $\train\setminus\train_k$
    and average them to obtain $\hat\vmu_{-k}$
\ENDFOR
\STATE Form the triples $\mathcal T=\{(\rvx_i,\hat\vmu_{-k(i)}(\rvx_i),
  \rvz_i-\hat\vmu_{-k(i)}(\rvx_i))\}_{i=1}^N$, with $k(i)$ the fold of pair $i$
\STATE Fit the base residual density $q_\psi(\rvr\mid \rvx,\vm)$ to $\mathcal T$ by maximizing \Cref{eq:stage2_objective}
\STATE \emph{CF-NPE-mixture:} Refit $\psi$ an additional $K-1$ times with different initializations and held-out residual folds
\RETURN Base posterior (\Cref{eq:cf_npe_base}) or mixture (\Cref{eq:cf_npe_offset_ensemble})
\end{algorithmic}
\end{algorithm}

\Cref{alg:cf_npe} summarizes the proposed method.

\subsection{The CF-NPE advantage}
\label{sec:theory}

Intuitively, by decoupling posterior mean estimation (stage~1) from estimation of the posterior's higher-order moments (stage~2), CF-NPE avoids the pitfall of standard NPE illustrated in \Cref{fig:variance_trajectory}. By minimizing the MSE, which recovers the conditional mean in the population limit, stage~1 attains the ``MSE endpoint'' line and uses the corresponding means to center the posterior approximation. The early-stopping epoch used for stage~1 (mean estimation) is not shared by stage~2.
Stage~2 then fits a conditional density estimator using out-of-fold residuals. Because these residuals are produced through cross-fitting, their distribution matches that of residuals of held-out responses at the same inputs.

Some of this intuition can be formalized. Our first result examines the difference between out-of-sample and in-sample prediction error.
\begin{restatable}[Optimism of In-Sample Prediction]{proposition}{cfnpecalibration}
\label{thm:cross_fitting}
Let $(\rvx_i,\rvz_i)_{i=1}^N\stackrel{\mathrm{iid}}{\sim}p(\rvx,\rvz)$ be training pairs, with $\E[\|\rvz\|^2]<\infty$.
For $i=1,\ldots,N$, let $\vc_i\in\R^d$ be a predicted posterior mean with finite second moment, computed by an arbitrary procedure from the training pairs and exogenous training randomness (e.g., weight initialization, fold assignment, dropout sequence, and minibatching), and let $\rvz'_i$ be a fresh draw from $p(\rvz\mid \rvx_i)$, independent of $\rvz_{1:N}$ and of the training randomness.
Then for each coordinate $\ell=1,\ldots,d$ and almost every $\rvx_{1:N}$,
\begin{equation}
\label{eq:standard_npe_cross_term}
\E[(\ervz'_{i\ell}-c_{i\ell})^2\mid \rvx_{1:N}] - \E[(\ervz_{i\ell}-c_{i\ell})^2\mid \rvx_{1:N}]
= 2\Cov\big(\ervz_{i\ell},c_{i\ell}\mid \rvx_{1:N}\big).
\end{equation}
\end{restatable}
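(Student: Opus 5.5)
The plan is to expand both squared errors around the predicted mean and compare the two expansions term by term, conditioning throughout on $\rvx_{1:N}$. Fix $i$ and $\ell$. Write
\[
(\ervz'_{i\ell}-c_{i\ell})^2=\ervz'^2_{i\ell}-2\ervz'_{i\ell}c_{i\ell}+c_{i\ell}^2,
\qquad
(\ervz_{i\ell}-c_{i\ell})^2=\ervz_{i\ell}^2-2\ervz_{i\ell}c_{i\ell}+c_{i\ell}^2 .
\]
All terms are integrable by the hypotheses: $\E\|\rvz\|^2<\infty$, $c_{i\ell}$ has a finite second moment, and Cauchy--Schwarz controls the products. Hence the conditional expectations given $\rvx_{1:N}$ exist and are finite for almost every $\rvx_{1:N}$, and the two expansions can be subtracted term by term. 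The $c_{i\ell}^2$ terms cancel immediately.

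\textbf{Step 1: the squared terms.} I will show that
\[
\E[\ervz'^2_{i\ell}\mid\rvx_{1:N}]=\E[\ervz_{i\ell}^2\mid\rvx_{1:N}] .
\]
Since the pairs are i.i.d., $\rvz_i$ is conditionally independent of $\rvx_{-i}$ given $\rvx_i$, so $\rvz_i\mid\rvx_{1:N}\sim p(\rvz\mid\rvx_i)$. By construction, $\rvz'_i$ has the same conditional law, and it is independent of everything else given $\rvx_i$. Thus both second moments equal $\E[\ervz_\ell^2\mid\rvx=\rvx_i]$.

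\textbf{Step 2: the cross term with the fresh draw.} I will show that
\[
\E[\ervz'_{i\ell}c_{i\ell}\mid\rvx_{1:N}]=\mu_\ell(\rvx_i)\,\E[c_{i\ell}\mid\rvx_{1:N}],
\qquad \mu_\ell(\rvx_i):=\E[\ervz_\ell\mid\rvx=\rvx_i].
\]
The prediction $c_{i\ell}$ is a measurable function of $(\rvx_{1:N},\rvz_{1:N},\xi)$, where $\xi$ is the exogenous training randomness. Given $\rvx_{1:N}$, the draw $\rvz'_i$ is independent of $(\rvz_{1:N},\xi)$, hence independent of $c_{i\ell}$, and the conditional expectation of the product factorizes.

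\textbf{Step 3: the cross term with the training response.} By definition of conditional covariance,
\[
\E[\ervz_{i\ell}c_{i\ell}\mid\rvx_{1:N}]=\Cov(\ervz_{i\ell},c_{i\ell}\mid\rvx_{1:N})+\mu_\ell(\rvx_i)\,\E[c_{i\ell}\mid\rvx_{1:N}],
\]
where I again use $\E[\ervz_{i\ell}\mid\rvx_{1:N}]=\mu_\ell(\rvx_i)$ from Step 1.

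\textbf{Conclusion.} Subtracting the two expansions, the squared terms cancel by Step 1 and the $c_{i\ell}^2$ terms cancel trivially. By Steps 2 and 3, the difference of the cross terms is $-2\mu_\ell\E[c_{i\ell}\mid\rvx_{1:N}]$ minus $-2\bigl(\Cov+\mu_\ell\E[c_{i\ell}\mid\rvx_{1:N}]\bigr)$. This leaves exactly $2\Cov(\ervz_{i\ell},c_{i\ell}\mid\rvx_{1:N})$, which is \Cref{eq:standard_npe_cross_term}.

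\textbf{Main obstacle.} The only delicate point is the measure-theoretic justification of the conditional independence used in Step 2, namely that $\rvz'_i\perp(\rvz_{1:N},\xi)$ given $\rvx_{1:N}$ with law $p(\cdot\mid\rvx_i)$. I would make it precise by constructing $\rvz'_i=g(\rvx_i,U_i)$ with $U_i$ uniform and independent of $(\rvz_{1:N},\xi,\rvx_{1:N})$. This is possible for a Borel kernel on $\R^d$. The factorization then follows from Fubini, and the ``almost every $\rvx_{1:N}$'' qualifier absorbs the usual version ambiguity of conditional expectations.
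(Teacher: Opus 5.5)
Your proof is correct and is essentially the paper's argument: expand both squared errors, cancel the squared-response terms because $\rvz_i$ and $\rvz'_i$ share the conditional law $p(\rvz\mid\rvx_i)$, use conditional independence to factor the fresh-draw cross term, and identify the surviving cross term with the conditional covariance. The only difference is cosmetic. The paper first centers everything at $\vmu^\star(\rvx_i)$, so the fresh-draw cross term vanishes outright. You expand around the origin instead and let the $\mu_\ell(\rvx_i)\,\E[c_{i\ell}\mid\rvx_{1:N}]$ terms cancel.
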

The proof closely follows a derivation of the optimism of in-sample prediction error rates from \citet{efron2004estimation}. All proofs appear in \Cref{app:oof}.

In standard NPE, each posterior approximation's mean $\vc_i$ is fitted using all training points, so the covariance term in \Cref{eq:standard_npe_cross_term} is non-zero and tends to grow as optimization progresses due to overfitting.
In CF-NPE, in contrast, the posterior approximation mean $\vc_i=\hat\vmu_{-k(i)}(\rvx_i)$ is fitted using the $K-1$ folds that do not contain $\rvz_i$.
Thus, the covariance term is zero, and the in-sample and out-of-sample expected squared residuals agree.
Their shared value is lower-bounded by the posterior variance, as the following proposition shows.
\begin{restatable}[Residual Second Moment]{proposition}{cfnperesidualmoment}
\label{thm:residual_moment}
For $(\rvx,\rvz)\sim p(\rvx,\rvz)$ with $\E[\|\rvz\|^2]<\infty$, write
$\vmu^\star(\rvx):=\E[\rvz\mid \rvx]$ and $\mSigma^\star(\rvx):=\Cov(\rvz\mid \rvx)$.
Then for any deterministic vector $\vc\in\R^d$, any coordinate $\ell=1,\ldots,d$, and almost every $\rvx$,
\begin{equation}
\label{eq:residual_moment_target}
\E[(\ervz_\ell-c_\ell)^2\mid \rvx]
=\mSigma^\star_{\ell\ell}(\rvx)+(c_\ell-\vmu^\star_\ell(\rvx))^2.
\end{equation}
\end{restatable}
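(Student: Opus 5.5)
The plan is the usual bias--variance decomposition about the conditional mean, carried out with conditional expectations given $\rvx$. Since $\E[\|\rvz\|^2]<\infty$, each coordinate $\ervz_\ell$ is square-integrable. Hence $\vmu^\star_\ell(\rvx)=\E[\ervz_\ell\mid\rvx]$ and $\mSigma^\star_{\ell\ell}(\rvx)=\E[(\ervz_\ell-\vmu^\star_\ell(\rvx))^2\mid\rvx]$ are well defined as versions of conditional expectations. Both are finite for almost every $\rvx$, and that null set is where the ``almost every'' qualifier comes from. I would fix these versions at the outset, for instance via a regular conditional distribution $p(\rvz\mid\rvx)$, so that every identity below holds pointwise off a single null set.

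Next, fix a coordinate $\ell$ and write
\[
\ervz_\ell-c_\ell=(\ervz_\ell-\vmu^\star_\ell(\rvx))+(\vmu^\star_\ell(\rvx)-c_\ell).
\]
Squaring gives three terms, and I would take conditional expectations given $\rvx$ of each.
\begin{itemize}
\item The first term, $\E[(\ervz_\ell-\vmu^\star_\ell(\rvx))^2\mid\rvx]$, is $\mSigma^\star_{\ell\ell}(\rvx)$ by definition.
\item The last term, $(\vmu^\star_\ell(\rvx)-c_\ell)^2$, is $\sigma(\rvx)$-measurable because $\vc$ is deterministic, so it comes out of the conditional expectation unchanged.
\item The cross term is $2(\vmu^\star_\ell(\rvx)-c_\ell)\,\E[\ervz_\ell-\vmu^\star_\ell(\rvx)\mid\rvx]$. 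Here the $\sigma(\rvx)$-measurable factor is pulled out, which is legitimate because the product is integrable by Cauchy--Schwarz. The remaining conditional expectation vanishes almost surely by definition of $\vmu^\star$.
\end{itemize}
Summing the three terms gives \Cref{eq:residual_moment_target}.

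The only step needing care is the measure-theoretic bookkeeping. First, the identity is asserted for almost every $\rvx$ rather than for all $\rvx$, so the exceptional null sets from the cross-term vanishing and from the versions chosen must be collected into one. Second, integrability must be checked so that the linearity and pull-out properties of conditional expectation apply. Both are routine once square-integrability is noted, so I expect no real obstacle. I would also remark that the same argument works verbatim if $\vc$ is replaced by any $\sigma(\rvx)$-measurable square-integrable predictor. That extension is what will be combined with \Cref{thm:cross_fitting}, applied conditionally on the out-of-fold regressor, which is independent of $\rvz_i$.
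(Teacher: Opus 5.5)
Your proof is correct and follows the same route as the paper's proof: add and subtract $\vmu^\star_\ell(\rvx)$, expand the square, identify the first conditional expectation as $\mSigma^\star_{\ell\ell}(\rvx)$, and kill the cross term using $\E[\ervz_\ell-\vmu^\star_\ell(\rvx)\mid\rvx]=0$ with $\vc$ fixed. Your extra bookkeeping (Cauchy--Schwarz integrability for the cross term, and fixing versions so there is a single exceptional null set) is more careful than the paper's one-line appeal to $\E[\ervz_\ell^2\mid\rvx]<\infty$ for almost every $\rvx$, but it does not change the argument.
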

Thus, the cross-fitted squared residual estimates the sum of the posterior variance and the squared error of the stage-1 regressor.
As the stage-1 regressor converges to $\vmu^\star$, the second moment that stage~2 targets converges to the posterior variance, for almost every $\rvx$.

For CF-NPE to learn a calibrated posterior approximation, however, it is not enough to avoid unwarranted optimism in the residuals used in stage~2. 
Because CF-NPE stage~2 is itself standard NPE applied to the residual dataset, a natural concern is that it inherits the limitations of standard NPE that motivated this work. 
Thankfully, the residuals, unlike the original training responses, are approximately centered at the origin---a property a Gaussian encoder can exploit by construction and normalizing flows through their initializations.

\paragraph{Calibration with centered Gaussians.}

Suppose that CF-NPE's family of posterior approximations is Gaussian with a diagonal covariance, and further that CF-NPE stage~2 fits a mean-zero Gaussian distribution to the residuals: $q_\psi(\rvr\mid \rvx,\vm)=\mathcal N(\rvr;\vzero,\operatorname{diag}(V_\psi(\rvx,\vm)))$. Then, the posterior approximation that maximizes CF-NPE's stage-2 objective is perfectly calibrated on the training set, as the next result shows.
\begin{restatable}[Reduced Chi-Square at a Likelihood Maximum]{proposition}{cfnpescale}
\label{prop:scale_calibration}
Let $P_{\mathrm{train}}$ be the empirical distribution $N^{-1}\sum_{i=1}^N\delta_{(\rvx_i,\rvz_i)}$, and let
$\mathcal Q$ be a family of diagonal Gaussian conditional densities, closed under common variance rescaling with the mean fixed.
If $q\in\mathcal Q$ maximizes $\E_{P_{\mathrm{train}}}[\log q(\rvz\mid \rvx)]$ with finite objective value, then $\chi^2_d(q;P_{\mathrm{train}})=1$.
\end{restatable}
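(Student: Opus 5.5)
The plan is a one-parameter perturbation argument. Write $q$ as $\mathcal N(\vmu_q(\rvx),\operatorname{diag}(\sigma_{q,1}^2(\rvx),\ldots,\sigma_{q,d}^2(\rvx)))$. The family $\mathcal Q$ is closed under common variance rescaling with the mean fixed. So for every $s>0$ the density $q_s$, which keeps the mean $\vmu_q$ and has variances $s\,\sigma_{q,\ell}^2(\rvx)$, also lies in $\mathcal Q$. Because $q$ maximizes the empirical log-likelihood over $\mathcal Q$, the function $s\mapsto F(s):=\E_{P_{\mathrm{train}}}[\log q_s(\rvz\mid\rvx)]$ is maximized at $s=1$. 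This reduces the proposition to a one-dimensional calculus problem.

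Next I would compute $F$ explicitly. Since $P_{\mathrm{train}}$ is a finite average over the $N$ training pairs, expanding the diagonal Gaussian log-density gives
\begin{equation*}
F(s) = -\frac{Nd}{2}\cdot\frac{1}{N}\log s \;-\; \frac{1}{2s}\,\frac{1}{N}\sum_{i=1}^N\sum_{\ell=1}^d \frac{(\ervz_{i\ell}-\vmu_{q,\ell}(\rvx_i))^2}{\sigma_{q,\ell}^2(\rvx_i)} \;+\; C,
\end{equation*}
where $C$ collects the terms that do not depend on $s$ (the $\log 2\pi$ and $\log\sigma_{q,\ell}^2(\rvx_i)$ terms). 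Let $A := \frac{1}{N}\sum_i\sum_\ell (\ervz_{i\ell}-\vmu_{q,\ell}(\rvx_i))^2/\sigma_{q,\ell}^2(\rvx_i)$. By \Cref{eq:reduced_chi2} with $P=P_{\mathrm{train}}$, we have $A = d\,\chi^2_d(q;P_{\mathrm{train}})$. The whole expression is then $F(s) = -\tfrac{d}{2}\log s - \tfrac{A}{2s} + C$.

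The step needing care is finiteness, so that all terms are well defined and $C$ is a finite constant. I would argue from the hypothesis that the objective value at $q$ is finite. Each $\sigma_{q,\ell}^2(\rvx_i)$ must then be positive and finite, since $q$ is a genuine Gaussian density evaluated at the training points. Each log-density term is then finite, so $A<\infty$ and $C$ is finite.

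For the optimization, differentiate to get $F'(s) = -\frac{d}{2s} + \frac{A}{2s^2}$. If $A=0$, then $F$ is strictly decreasing and has no maximizer at $s=1$, which is a contradiction. Hence $A>0$, $F$ is strictly concave in $1/s$, and its unique critical point is $s^\ast = A/d$. Since $F$ attains its maximum over $s>0$ at the interior point $s=1$, we must have $F'(1)=0$, which gives $A=d$ and therefore $\chi^2_d(q;P_{\mathrm{train}})=A/d=1$. No earlier proposition is needed; the argument is self-contained.
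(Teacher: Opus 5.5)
Your proposal is correct and is essentially the paper's argument. The paper also perturbs $q$ by a common variance rescaling, multiplying the variances by $\alpha^2$ (your $s=\alpha^2$), so that the objective changes by $-d\log\alpha+\tfrac d2(1-\alpha^{-2})\chi^2_d(q;P_{\mathrm{train}})$, and then sets the derivative at $\alpha=1$ to zero; your extra remarks on finiteness, on excluding $A=0$, and on the uniqueness of the critical point $s^\ast=A/d$ are harmless but not needed.
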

Hence, for CF-NPE, but not standard NPE, calibration on the training set implies calibration on test data by \Cref{thm:cross_fitting}, which equates the expected in-sample and out-of-sample squared residuals.

\paragraph{Calibration with normalizing flows.}
Typical normalizing flows do not provide a way of centering the posterior approximation by construction. Nevertheless, there is reason to expect that they too will benefit from training with centered responses. With a centered base distribution and symmetric weight and bias initialization, both of which are the norm, a normalizing flow produces approximately mean-zero posterior approximations initially. Because the mean starts near its optimum and stays near it along the optimization path, early stopping can stop optimization as soon as the higher moments are estimated well without compromising mean-estimation quality.

\section{Experiments}
\label{sec:experiments}

We benchmark several variants of CF-NPE and standard NPE with two families of amortized conditional density estimators: diagonal Gaussian encoders and neural spline flows (NSF) \citep{durkan2019neural}, implemented with Zuko \citep{francois_rozet_2024_10571785}.%
\footnote{
Software to reproduce all experiments is available from \url{https://github.com/jeff-regier/cf-npe}.
}

\paragraph{Problems.}
We consider three classes of inference problems.
The first, the Gaussian--Gaussian problem, is a hierarchical model with a 200-dimensional latent space and a 200-dimensional data space. 
The second, the funnel problem, is the pathological funnel from \citet{neal2003slice}, with 200-dimensional latent and data spaces.
The third, the mixture problem, involves a mixture of two Gaussian components, with a 20-dimensional latent space and a 40-dimensional data space.
All three problems' simulators have hyperparameters, such as latent and data dimensions, which we set to make the problems interesting: hard enough that some inference methods would struggle, but not so hard that the problem is hopeless for every method. Unless otherwise specified, we use $N=10{,}000$ training pairs and $1{,}000$ test pairs. For each of 10 seeds, the same simulations are used across methods.
\Cref{app:synthetic_problems} gives further details of each inference problem.

\paragraph{Metrics.}
Our primary metric is the held-out log-likelihood, which, as a proper scoring rule \citep{gneiting2007strictly}, jointly assesses calibration and precision. Log-likelihood is a uniquely important metric in our benchmarking because standard NPE is our chief comparator, and in standard NPE, maximizing log-likelihood, which equivalently minimizes the expected KL divergence from the posterior to its approximation \citep{ambrogioni2019forward}, is what connects the posterior approximation to the posterior.
It is only because standard NPE minimizes this KL divergence that we expect standard NPE posterior approximations to be at least somewhat calibrated and to perform well on secondary metrics of posterior approximation quality.

Our secondary metrics include the MSE of the posterior mean (relative to the simulation truth) and $\chi^2_d$, both of which were considered in our earlier theoretical analysis. Additional calibration metrics reported include SBC-KS \citep{talts2018validating}, TARP \citep{lemos2023sampling}, and C2ST \citep{lopezpaz2017revisiting}.

\paragraph{Hyperparameter selection.}
Hyperparameter tuning is critical for these benchmarks because they seek to measure network fragility due to overfitting.
Thus, the results depend heavily on how the networks are tuned.
We use the Optuna hyperparameter optimizer \citep{akiba2019optuna} to tune all networks, searching over network depth, width, learning rate, weight-decay strength, dropout rate, and batch size.
For NSF, we also search over the number of coupling transforms, spline bins, and a tail-bound parameter.
We run every network with the best-performing configuration identified across 100 trials (20 trials of random search followed by refinement).
Each network is re-tuned for each problem.
Mixture and ensemble variants are not tuned separately but inherit the base method's hyperparameters.
Unless otherwise specified, CF-NPE performs cross-fitting with $K=10$ folds.

\begin{table}[t]
\centering
\scriptsize
\setlength{\tabcolsep}{2.5pt}
\renewcommand{\arraystretch}{0.88}
\caption{Gaussian--Gaussian problem ($d=200$). Held-out log-likelihood (nats, mean
$\pm$ standard error over 10 seeds), $\chi^2_d$ (target 1), and
posterior-mean MSE under each approximating family. MSE standard errors are at most
$0.001$, except $0.003$ for Std-NPE (standard NPE) with the NSF approximation. Bold marks entries
within one standard error of the best within an approximating family. The reference rows are analytic.}
\label{tab:synthetic_gauss_d200}
\input{tables/tab_synthetic_gauss_d200}

\vspace{14pt}
\caption{Funnel problem ($d=200$). Columns as in
\Cref{tab:synthetic_gauss_d200}. MSE standard errors are at most
$0.006$, except $0.015$ for Std-NPE with the NSF approximation. The posterior row exploits the funnel's known form, which the networks are not given: conditional on $z_1$, the other coordinates have Gaussian posteriors, leaving a one-dimensional integral over $z_1$ (\Cref{app:synthetic_problems}).
Means and standard errors are averaged over 9 seeds; one seed is excluded for each method because its test set contains a pair with $z_1 = 4.54$, far beyond the training-data range, which dominates $\chi^2_d$.}
\label{tab:synthetic_funnel_d200}
\input{tables/tab_synthetic_funnel_d200}

\vspace{14pt}
\caption{Mixture problem ($d=20$). Columns as in \Cref{tab:synthetic_gauss_d200}. MSE standard errors are at most $0.001$.}
\label{tab:synthetic_mixture_d20}
\input{tables/tab_synthetic_mixture_d20}
\end{table}

\paragraph{Main result.}
CF-NPE ($M=1$) achieves higher test log-likelihood than standard NPE across all problems and both approximating families (\Crefrange{tab:synthetic_gauss_d200}{tab:synthetic_mixture_d20}).
The increase is consistent across seeds.
While nats are difficult to interpret in absolute terms, varying the training-set size indicates the practical significance of this improvement: CF-NPE ($M=3$) achieves higher test log-likelihood with an $N=10{,}000$ simulation budget than standard NPE does with $N=30{,}000$ simulations (\Cref{app:gg_ntrain}).

The higher test log-likelihood reflects improved recovery of both the first and second moments of the posterior distribution, as \Cref{sec:theory} anticipates.
MSE is lower for CF-NPE in every cell. For CF-NPE, $\chi^2_d$ is within a few percent of one in every cell except the funnel problem with the NSF family, where it is at most 1.15; for standard NPE it is as high as 1.51.
SBC-KS, TARP, and C2ST also find better calibration for CF-NPE in most cells (\Cref{app:synthetic_calibration}).

CF-NPE uses more compute, but the difference scarcely matters in absolute terms in these experiments, as each training run takes under 10 minutes (\Cref{app:synthetic_runtime}). These experiments, unlike the application in \Cref{sec:strong_lensing}, do not saturate the GPU, so CF-NPE trains all folds in parallel (using \texttt{vmap} in PyTorch).

\paragraph{Mixtures and ensembles.}
Deep ensembles (see \Cref{sec:background}) give standard NPE a way to better leverage parallel compute resources and trade compute for fidelity by forming a mixture of standard NPE posterior approximations trained with different weight-initialization seeds.
These mixtures appear as \emph{Std-NPE-mixture} in \Crefrange{tab:synthetic_gauss_d200}{tab:synthetic_mixture_d20}, with $K'$ denoting the number of ensemble members (mixture components).
We set $K'$ to provide approximately compute-matched counterparts to the CF-NPE variants.
Standard NPE benefits substantially from mixtures in terms of test log-likelihood.

However, CF-NPE can also leverage ensembling.
With $M > 1$, the stage-1 (mean) networks are ensembled, while stage~2, and thus the overall posterior approximation, remains a single Gaussian or NSF.
The CF-NPE-mixture variant additionally ensembles $K=10$ stage-2 distributions, forming a uniform mixture over them. 
CF-NPE performance improves as $M$ increases, suggesting stage-1 fitting is sensitive to initialization (capacity is already tuned). Forming a stage-2 mixture of NSF approximators helps with the funnel and mixture problems and is otherwise neutral.
With $M = 3$, CF-NPE outperforms the Std-NPE-mixture variants, which are roughly compute-matched.

The mixture variants of standard NPE are overdispersed in every cell, with $\chi^2_d$ as low as 0.20.
Forming a mixture improves mean estimation without reducing spread, creating overdispersion (as anticipated in \Cref{sec:background}).
In contrast, setting $M > 1$ with CF-NPE improves mean estimation by making the fit more robust to seed noise, without disturbing calibration.
Because runtime is dominated by stage-2 fitting with the NSF posterior, and stage~1 is parallelized over both $K$ folds and $M$ replicates, increasing $M$ from 1 to 3 increases runtime by at most 5 seconds (\Cref{app:synthetic_runtime}).

\paragraph{Ablations.}
On the Gaussian--Gaussian problem, we ablate the ingredients of CF-NPE, vary the training-set size, and test alternative remedies for standard NPE.
Cross-fitting is an essential ingredient: fitting stage~2 to in-fold rather than out-of-fold residuals leaves the posterior mean unchanged but makes the posterior overconfident ($\chi^2_d \approx 1.15$), as \Cref{sec:theory} predicts. A single stage-1 fit without folds or replicates is also overconfident and estimates the mean less accurately (\Cref{app:gg_ablation}).
Log-likelihood rises monotonically with the number of folds $K$ from 2 to 16, and even $K=2$ outperforms standard NPE (\Cref{app:gg_kfold}).
A substantially oversized stage-2 NSF lowers test log-likelihood but leaves $\chi^2_d$ within 0.01 of one (\Cref{app:flow_size}).
CF-NPE outperforms standard NPE at every training-set size from $N=1{,}000$ to $N=100{,}000$ (\Cref{app:gg_ntrain}).
CF-NPE's advantage also depends on the latent dimension: the gap is negligible at $d=2$ and largest at the dimension of each problem's main experiment, though it does not grow monotonically with $d$ (\Cref{app:synthetic_dimsweep}).
None of the alternative remedies for standard NPE closes the gap to CF-NPE. Temperature scaling corrects $\chi^2_d$ but recovers little log-likelihood (\Cref{app:gg_temperature}); $\beta$-NLL and stop-gradient losses change little (\Cref{app:gg_gradient}); and training each Std-NPE-mixture component on a different fold split, as CF-NPE's stage-1 networks are, raises log-likelihood, but by less than half of the mixture's gap to CF-NPE, and leaves the mixture overdispersed (\Cref{app:gg_split_by_fold}).

\section{Application: Strong-Lensing Source Reconstruction}
\label{sec:strong_lensing}

Large masses curve spacetime. Thus, distant galaxies appear sheared or distorted when massive halos known as ``strong lenses'' lie along the line of sight \citep{treu2010strong} (see \Cref{fig:strong_lensing_recon}). Strong lenses are important cosmological probes for three reasons: (1) they act as magnifying lenses, revealing structure in the source plane that otherwise would be imperceptible; (2) they can show multiple images of the same background galaxy, with time delays that measure the expansion rate of the universe; and (3) the extent of the lensing can reveal the distribution of dark matter along various lines of sight.
Inferring the unlensed source plane is important for these applications, but recovery is inherently probabilistic due to pixelization, shot noise, atmospheric point spread, and degeneracy (non-identifiability) between the lens and the unlensed source. 
Upcoming surveys will deliver on the order of $10^5$ lenses \citep{collett2015population}, prompting interest in amortized inference methods \citep{hezaveh2017fast}.

\begin{figure}
\centering
\includegraphics[width=\linewidth]{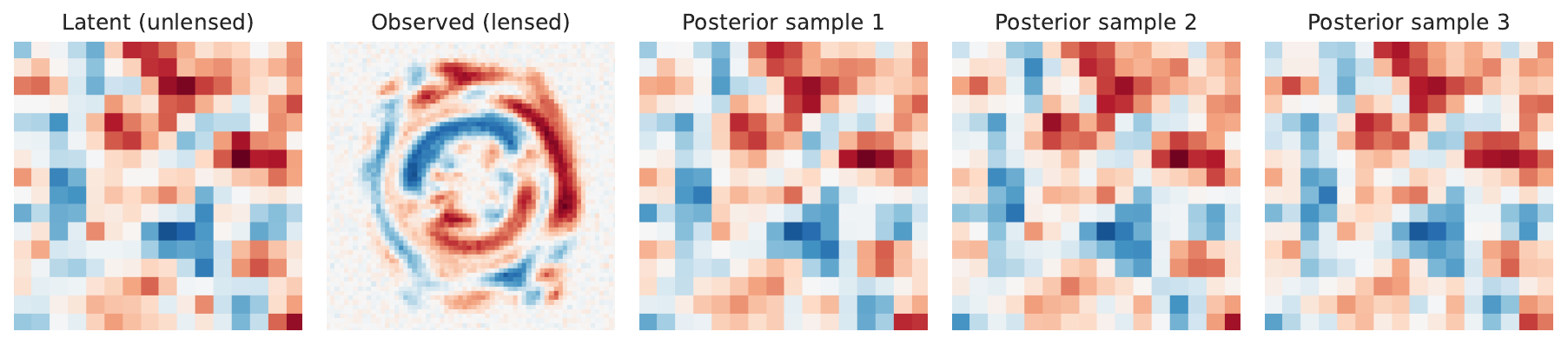}
\caption{Strong-lensing source reconstruction. From left: the true unlensed
source field of one held-out example, its observed lensed image, and three
posterior samples from CF-NPE-mixture with $M=3$.}
\label{fig:strong_lensing_recon}
\end{figure}

We use a five-parameter lens model, a singular isothermal ellipsoid (SIE) \citep{kormann1994isothermal} with external shear, and use Lenstronomy \citep{birrer2018lenstronomy} to render $64\times64$-pixel observed images with Gaussian readout noise, given a source plane and lens.
The latent source plane is specified as a $16\times16$ grid of source pixels, which are 
drawn from a Gaussian process.
The nuisance lensing parameters are sampled uniformly for each example and marginalized over during inference.
We train on $10{,}000$ pairs per seed from this data-generating process.
Lenstronomy is fast, which keeps our experiments inexpensive, but it serves here as a stand-in for high-fidelity simulators such as hydrodynamical simulations of lens galaxies, which can cost over a million core-hours each \mbox{\citep{richings2021high}}; with such simulators, the simulation budget, not training, limits the size of the training set.
Simulation products are cached per seed and shared across methods; reported runtimes exclude simulation.

We apply both standard NPE and CF-NPE to these data using an NSF family of conditional density estimators.
The NSF is composed of coupling layers that alternate between the two halves of a column-striped partition of the source pixels.
Our encoder architecture, shared by all methods, is a residual convolutional neural network with two stride-2 stages that maps the $64 \times 64$-pixel observed image to a $16 \times 16$-pixel representation of the source plane. The network architecture has hyperparameters that enable attention layers, spatial dropout, stochastic-depth regularization, and squeeze-and-excitation, and allow variation in the depth and width of various modules.
We select hyperparameters for each encoder using Optuna and list them in \Cref{app:exp:strong_lensing}.

\begin{table}[!t]
\centering
\scriptsize
\setlength{\tabcolsep}{4pt}
\caption{Strong-lensing source reconstruction with an NSF posterior, 10 seeds per
method. Test log-likelihood is the median $[$interquartile range$]$ over seeds.
Posterior-mean MSE and $\chi^2_d$ are averages over seeds $\pm$ standard error.
Runtime is the median training wall-clock time per seed, excluding simulation time, measured on an NVIDIA GeForce RTX 5090 GPU. CF-NPE rows show stage~1 $+$ stage~2 runtime.}
\label{tab:strong_lensing}
\input{tables/tab_strong_lensing}
\end{table}

\paragraph{Results.}
CF-NPE delivers major improvements over standard NPE (\Cref{tab:strong_lensing}).
Every CF-NPE variant has a higher median test log-likelihood, our primary metric, than standard NPE, with non-overlapping interquartile ranges. Training CF-NPE with $M=1$ takes roughly an hour, whereas standard NPE takes less than 10 minutes, because CF-NPE requires training $K=10$ stage-1 networks.
Unlike in \Cref{sec:experiments}, these CF-NPE networks individually saturate the GPU, so they cannot be sped up through parallelization.

The mixture variants of standard NPE provide compute-matched comparators to CF-NPE.
These variants improve on the non-mixture version, providing higher test log-likelihoods.
However, CF-NPE can also leverage ensembling and mixtures, and the gains from doing so stack with the gains from cross-fitting. With an ensemble of $M=3$ stage-1 networks per fold across $K=10$ folds, CF-NPE delivers a higher test log-likelihood than any standard NPE mixture tested, including the $K'=50$ variant, which trains more networks than CF-NPE with $M=3$ and takes longer to train. 

CF-NPE-mixture, which forms a mixture of stage-2 posterior approximations in addition to ensembling over stage-1 networks, achieves the highest test log-likelihood of all. Training the mixture adds little runtime, as most of the runtime is devoted to the stage-1 ensemble.

In standard NPE, early stopping trades off underfitting the posterior mean against overfitting the posterior variance; CF-NPE removes this trade-off. On this problem, the benefit appears as more accurate posterior means and better calibration: every CF-NPE variant has $\chi^2_d$ within 0.03 of one, whereas standard NPE is overconfident and the Std-NPE-mixture variants are underconfident.

Appendix \Cref{app:exp:strong_lensing} presents further calibration metrics. SBC-KS agrees with $\chi^2_d$: every CF-NPE variant scores better than every standard NPE variant.
Standard NPE attains the lowest TARP KS, but TARP is difficult to interpret when, as here, the posterior is far more concentrated than the prior. TARP draws a reference point from the prior and compares its distances to the simulated latent and to the posterior draws. Two common errors can offset in this comparison: if the approximation's mean is farther from the prior mean than the exact posterior mean is, the draws tend to lie farther from the reference point than the simulated latent; if the approximation is too narrow, the simulated latent tends to lie farther away than the draws.
C2ST is near $0.5$ for every method and is therefore uninformative.

\section{Discussion}
\label{sec:discussion}

CF-NPE corrects a problem with standard NPE that occurs when a simulated training set is reused for multiple epochs, causing the mean estimates to overfit and the in-sample residuals to become underdispersed proxies for the out-of-sample residuals.
This kind of multi-epoch training is standard practice in NPE, but if simulation is cheap enough to draw fresh training pairs at each optimizer step, then in-sample and out-of-sample residuals have the same distribution, and standard NPE's defect is not exercised.
Similarly, if the latent-space dimension is low relative to the number of training examples, standard NPE may not overfit the posterior means and therefore perform like CF-NPE.

\paragraph{Compute trade-off.}
CF-NPE generally requires more training time than standard NPE because it trains multiple networks.
For many problems, training time is small relative to simulation time, or small in absolute terms, and thus of little consequence.
Occasionally, CF-NPE can even train faster than standard NPE because the stage-1 networks are lightweight and can be trained in parallel, and decoupling mean estimation from conditional density estimation favors a lightweight stage-2 network.
More typically, however, when training each network saturates the GPU, CF-NPE is approximately an order of magnitude slower than standard NPE because it trains $O(K)$ networks.
For hyperparameter selection, which can be much more expensive than training, CF-NPE can tune a single ensemble member rather than the full ensemble collectively, giving roughly the same tuning cost as standard NPE.

\paragraph{Sequential NPE.}
Our present analysis covers only single-round amortized NPE.
In sequential NPE methods such as SNPE or APT~\citep{papamakarios2016fast, greenberg2019automatic}, each round's proposal is fixed given the previous rounds, so simulations within a round are independent.
CF-NPE could replace standard NPE within each round, perhaps yielding good empirical results.
However, we have not explored this possibility, as the dependence across rounds makes the entire procedure difficult to analyze, whether CF-NPE performs each round or not.

\paragraph{Cross-fold ensembling.}
CF-NPE averages $M$ networks to estimate a posterior mean, rather than all $KM$ stage-1 networks. At training time, restricting averaging to $M$ networks is needed to implement cross-fitting, but at inference time, the stage-2 network has already been fit, so in principle we could average across all $KM$ networks to get more stable mean estimates.
The drawback of doing so is that improving the mean without shrinking the variance causes underconfidence, which is acceptable in some applications but not others.

\paragraph{Calibration with normalizing flows.}
The theoretical case for CF-NPE's calibration is strongest with a Gaussian encoder, whereas the analysis for normalizing flows leans more on intuition (\Cref{sec:theory}).
Nevertheless, CF-NPE with normalizing flows appears to be well calibrated empirically; even in an ablation study in which the normalizing flows are poorly tuned (oversized and under-regularized), CF-NPE remains nearly perfectly calibrated (\Cref{app:flow_size}).
We would like to better understand why.
The complicated training dynamics of discrete normalizing flows, as considered here, make them difficult to analyze, but continuous normalizing flows trained with flow matching \citep{wildberger2023flow} may be more amenable to analysis.

\paragraph{Higher standardized moments.}
CF-NPE separates mean (first moment) estimation from estimation of the higher moments, avoiding a trade-off between first and second moments. All the higher moments are estimated together, however, by standard NPE in stage~2, presumably entailing a tension between the second and third moments like the one that motivated this work.
Nested forms of cross-fitting could potentially extend the benefits of CF-NPE to these higher moments, but at rapidly escalating computational cost. 
Exploring whether approximate refitting methods \citep[e.g.,][]{giordano2019swiss} can recover much of that benefit at a fraction of the cost is another interesting direction for future work.

\raggedbottom
\bibliographystyle{abbrvnat}
\bibliography{references}

\newpage
\appendix
\begin{center}
{\Large\bfseries Appendices}
\end{center}
\vspace{1em}
\renewcommand{\thefigure}{\Alph{section}.\arabic{figure}}
\renewcommand{\theHfigure}{\Alph{section}.\arabic{figure}}
\setcounter{figure}{0}
\renewcommand{\thetable}{\Alph{section}.\arabic{table}}
\renewcommand{\theHtable}{\Alph{section}.\arabic{table}}
\setcounter{table}{0}
\renewcommand{\theproposition}{\Alph{section}.\arabic{proposition}}
\makeatletter
\@addtoreset{figure}{section} \@addtoreset{table}{section}
\makeatother

\input{appendix}


\end{document}

%% file: math_commands.tex
\usepackage{amsmath,amsfonts,bm}

\def\eqref#1{equation~\ref{#1}}

\def\1{\bm{1}}
\newcommand{\train}{\mathcal{D}}

\def\rvr{{\mathbf{r}}}

\def\rvx{{\mathbf{x}}}

\def\rvz{{\mathbf{z}}}

\def\ervx{{\textnormal{x}}}

\def\ervz{{\textnormal{z}}}

\def\vzero{{\bm{0}}}

\def\vmu{{\bm{\mu}}}

\def\vc{{\bm{c}}}

\def\ve{{\bm{e}}}

\def\vm{{\bm{m}}}

\def\vs{{\bm{s}}}

\def\vu{{\bm{u}}}

\def\mA{{\bm{A}}}

\def\mI{{\bm{I}}}

\def\mSigma{{\bm{\Sigma}}}

\DeclareMathAlphabet{\mathsfit}{\encodingdefault}{\sfdefault}{m}{sl}
\SetMathAlphabet{\mathsfit}{bold}{\encodingdefault}{\sfdefault}{bx}{n}

\newcommand{\E}{\mathbb{E}}
\newcommand{\Ls}{\mathcal{L}}
\newcommand{\R}{\mathbb{R}}

\newcommand{\Var}{\mathrm{Var}}

\newcommand{\Cov}{\mathrm{Cov}}


%% file: tables/tab_synthetic_gauss_d200.tex
\begin{tabular}{w{l}{74pt}@{\hspace{2em}}w{c}{58pt}w{c}{51pt}w{c}{21pt}@{\hspace{2em}}w{c}{58pt}w{c}{51pt}w{c}{21pt}}
\toprule
& \multicolumn{3}{c}{Gaussian} & \multicolumn{3}{c}{NSF} \\ \cmidrule(lr){2-4} \cmidrule(lr){5-7}
Method or reference & Test LL $\uparrow$ & $\chi^2_d$ $(\to 1)$ & MSE $\downarrow$ & Test LL $\uparrow$ & $\chi^2_d$ $(\to 1)$ & MSE $\downarrow$ \\
\midrule
Prior & $-283.79$ & $1.000$ & $1.000$ & $-283.79$ & $1.000$ & $1.000$ \\
Posterior & $-122.84$ & $1.000$ & $0.200$ & $-122.84$ & $1.000$ & $0.200$ \\
\midrule
Std-NPE & $-141.30 \pm 0.11$ & $1.095 \pm 0.001$ & $0.234$ & $-169.66 \pm 0.54$ & $0.979 \pm 0.004$ & $0.306$ \\
Std-NPE-mixture, $K'{=}10$ & $-134.79 \pm 0.12$ & $0.890 \pm 0.001$ & $0.212$ & $-159.24 \pm 0.27$ & $0.677 \pm 0.002$ & $0.251$ \\
Std-NPE-mixture, $K'{=}30$ & $-133.28 \pm 0.11$ & $0.875 \pm 0.001$ & $0.211$ & $-156.61 \pm 0.14$ & $0.658 \pm 0.001$ & $0.246$ \\
\midrule
CF-NPE, $M{=}1$ & $-133.92 \pm 0.13$ & $\mathbf{0.994 \pm 0.002}$ & $0.224$ & $-133.92 \pm 0.13$ & $\mathbf{1.000 \pm 0.001}$ & $0.224$ \\
CF-NPE, $M{=}3$ & $-128.42 \pm 0.10$ & $\mathbf{0.993 \pm 0.002}$ & $0.212$ & $-128.31 \pm 0.10$ & $\mathbf{1.000 \pm 0.001}$ & $0.212$ \\
CF-NPE, $M{=}10$ & $\mathbf{-126.41 \pm 0.10}$ & $\mathbf{0.993 \pm 0.002}$ & $\mathbf{0.208}$ & $\mathbf{-126.22 \pm 0.10}$ & $\mathbf{1.000 \pm 0.001}$ & $\mathbf{0.207}$ \\
CF-NPE-mixture, $M{=}3$ & $-128.41 \pm 0.10$ & $\mathbf{0.994 \pm 0.001}$ & $0.212$ & $-128.30 \pm 0.10$ & $\mathbf{1.001 \pm 0.001}$ & $0.212$ \\
\bottomrule
\end{tabular}

%% file: tables/tab_synthetic_funnel_d200.tex
\begin{tabular}{w{l}{74pt}@{\hspace{2em}}w{c}{58pt}w{c}{51pt}w{c}{21pt}@{\hspace{2em}}w{c}{58pt}w{c}{51pt}w{c}{21pt}}
\toprule
& \multicolumn{3}{c}{Gaussian} & \multicolumn{3}{c}{NSF} \\ \cmidrule(lr){2-4} \cmidrule(lr){5-7}
Method or reference & Test LL $\uparrow$ & $\chi^2_d$ $(\to 1)$ & MSE $\downarrow$ & Test LL $\uparrow$ & $\chi^2_d$ $(\to 1)$ & MSE $\downarrow$ \\
\midrule
Prior & $-283.79$ & $1.000$ & $1.000$ & $-283.79$ & $1.000$ & $1.000$ \\
Posterior & $178.68$ & $1.000$ & $0.017$ & $178.68$ & $1.000$ & $0.017$ \\
\midrule
Std-NPE & $-101.17 \pm 6.42$ & $1.505 \pm 0.075$ & $0.134$ & $-58.98 \pm 5.84$ & $\mathbf{0.995 \pm 0.011}$ & $0.242$ \\
Std-NPE-mixture, $K'{=}10$ & $-62.49 \pm 1.21$ & $0.285 \pm 0.004$ & $0.055$ & $-20.30 \pm 1.75$ & $0.231 \pm 0.002$ & $0.105$ \\
Std-NPE-mixture, $K'{=}30$ & $-56.01 \pm 1.28$ & $0.249 \pm 0.002$ & $\mathbf{0.049}$ & $-9.90 \pm 1.52$ & $0.198 \pm 0.002$ & $0.096$ \\
\midrule
CF-NPE, $M{=}1$ & $15.28 \pm 0.88$ & $\mathbf{1.007 \pm 0.005}$ & $0.076$ & $38.09 \pm 0.67$ & $1.068 \pm 0.005$ & $0.045$ \\
CF-NPE, $M{=}3$ & $56.36 \pm 1.00$ & $\mathbf{1.009 \pm 0.004}$ & $0.057$ & $83.82 \pm 0.73$ & $1.107 \pm 0.006$ & $0.031$ \\
CF-NPE, $M{=}10$ & $\mathbf{78.90 \pm 1.03}$ & $\mathbf{0.998 \pm 0.008}$ & $\mathbf{0.050}$ & $\mathbf{114.09 \pm 0.92}$ & $1.154 \pm 0.014$ & $\mathbf{0.026}$ \\
CF-NPE-mixture, $M{=}3$ & $58.08 \pm 0.97$ & $\mathbf{0.990 \pm 0.004}$ & $0.057$ & $89.64 \pm 0.70$ & $1.059 \pm 0.004$ & $0.030$ \\
\bottomrule
\end{tabular}

%% file: tables/tab_synthetic_mixture_d20.tex
\begin{tabular}{w{l}{74pt}@{\hspace{2em}}w{c}{58pt}w{c}{51pt}w{c}{21pt}@{\hspace{2em}}w{c}{58pt}w{c}{51pt}w{c}{21pt}}
\toprule
& \multicolumn{3}{c}{Gaussian} & \multicolumn{3}{c}{NSF} \\ \cmidrule(lr){2-4} \cmidrule(lr){5-7}
Method or reference & Test LL $\uparrow$ & $\chi^2_d$ $(\to 1)$ & MSE $\downarrow$ & Test LL $\uparrow$ & $\chi^2_d$ $(\to 1)$ & MSE $\downarrow$ \\
\midrule
Prior & $-28.38$ & $1.000$ & $1.000$ & $-28.38$ & $1.000$ & $1.000$ \\
Posterior & $0.07$ & $1.000$ & $0.087$ & $0.07$ & $1.000$ & $0.087$ \\
\midrule
Std-NPE & $-4.45 \pm 0.03$ & $\mathbf{1.007 \pm 0.003}$ & $0.095$ & $-3.02 \pm 0.03$ & $0.815 \pm 0.004$ & $0.104$ \\
Std-NPE-mixture, $K'{=}10$ & $-4.13 \pm 0.02$ & $0.956 \pm 0.003$ & $0.093$ & $-2.48 \pm 0.03$ & $0.721 \pm 0.002$ & $0.098$ \\
Std-NPE-mixture, $K'{=}30$ & $-4.10 \pm 0.02$ & $0.953 \pm 0.003$ & $0.093$ & $-2.41 \pm 0.03$ & $0.713 \pm 0.002$ & $0.097$ \\
\midrule
CF-NPE, $M{=}1$ & $-4.05 \pm 0.03$ & $\mathbf{0.994 \pm 0.004}$ & $0.092$ & $-0.81 \pm 0.02$ & $\mathbf{1.001 \pm 0.005}$ & $0.092$ \\
CF-NPE, $M{=}3$ & $-3.88 \pm 0.03$ & $\mathbf{0.995 \pm 0.004}$ & $0.090$ & $-0.63 \pm 0.03$ & $\mathbf{1.008 \pm 0.010}$ & $0.090$ \\
CF-NPE, $M{=}10$ & $\mathbf{-3.82 \pm 0.03}$ & $\mathbf{0.992 \pm 0.003}$ & $\mathbf{0.090}$ & $-0.55 \pm 0.02$ & $1.013 \pm 0.006$ & $\mathbf{0.090}$ \\
CF-NPE-mixture, $M{=}3$ & $-3.88 \pm 0.03$ & $0.987 \pm 0.004$ & $0.090$ & $\mathbf{-0.49 \pm 0.03}$ & $1.009 \pm 0.004$ & $\mathbf{0.090}$ \\
\bottomrule
\end{tabular}

%% file: tables/tab_strong_lensing.tex
\begin{tabular}{lcccc}
\toprule
Method & Test LL $\uparrow$ & $\chi^2_d$ $(\to 1)$ & MSE $\downarrow$ & Runtime (s) $\downarrow$ \\
\midrule
Std-NPE & $2.59\ [-0.67, 3.40]$ & $1.196 \pm 0.005$ & $0.124 \pm 0.001$ & 447 \\
Std-NPE-mixture, $K'{=}10$ & $13.49\ [13.14, 16.27]$ & $0.875 \pm 0.015$ & $0.111 \pm 0.001$ & 3906 \\
Std-NPE-mixture, $K'{=}50$ & $29.83\ [29.43, 32.03]$ & $0.810 \pm 0.022$ & $0.104 \pm 0.001$ & 20834 \\
\midrule
CF-NPE, $M{=}1$ & $60.05\ [58.82, 63.03]$ & $\mathbf{1.025 \pm 0.021}$ & $0.050 \pm 0.001$ & $3343{+}265$ \\
CF-NPE, $M{=}3$ & $83.21\ [80.27, 84.61]$ & $\mathbf{0.985 \pm 0.020}$ & $\mathbf{0.043 \pm 0.000}$ & $10009{+}265$ \\
CF-NPE-mixture, $M{=}3$ & $\mathbf{88.08\ [85.01, 90.26]}$ & $\mathbf{0.997 \pm 0.015}$ & $\mathbf{0.043 \pm 0.000}$ & $10009{+}2871$ \\
\bottomrule
\end{tabular}

%% file: appendix.tex
\makeatletter
\setlength{\@fptop}{0pt}
\setlength{\@fpbot}{0pt plus 1fil}
\makeatother
\newcommand{\appendixtablestyle}{%
  \small
  \setlength{\tabcolsep}{4pt}%
  \renewcommand{\arraystretch}{1.1}%
}

\section{Calibration of CF-NPE}
\label{app:oof}

\cfnpecalibration*

\begin{proof}
Condition on $\rvx_{1:N}$ and let $\vu_i:=\rvz_i-\vmu^\star(\rvx_i)$,
$\vu'_i:=\rvz'_i-\vmu^\star(\rvx_i)$, and $\ve_i:=\vc_i-\vmu^\star(\rvx_i)$, with
$\vmu^\star(\rvx):=\E[\rvz\mid \rvx]$. The squared residuals satisfy
\[
(\ervz_{i\ell}-c_{i\ell})^2 =u_{i\ell}^2-2u_{i\ell}e_{i\ell}+e_{i\ell}^2
\]
and
\[
(\ervz'_{i\ell}-c_{i\ell})^2 =(u'_{i\ell})^2-2u'_{i\ell}e_{i\ell}+e_{i\ell}^2.
\]
Because the second moments are finite, all the expectations below exist.
Given $\rvx_{1:N}$, $\vu'_i$ is independent of $\vc_i$ with mean zero, so
$\E[u'_{i\ell}e_{i\ell}\mid \rvx_{1:N}]=0$. Because the pairs are independent,
$\rvz_i\mid\rvx_{1:N}\sim p(\rvz\mid\rvx_i)$, so $u_{i\ell}$ and $u'_{i\ell}$ have
the same conditional law and hence equal conditional second moments.
Subtracting the two expansions and taking conditional expectations gives
\[
\E[(\ervz_{i\ell}-c_{i\ell})^2-(\ervz'_{i\ell}-c_{i\ell})^2\mid \rvx_{1:N}]
=-2\,\E[u_{i\ell}e_{i\ell}\mid \rvx_{1:N}]
=-2\Cov(\ervz_{i\ell},c_{i\ell}\mid \rvx_{1:N}),
\]
the last step because
$\E[u_{i\ell}\mid \rvx_{1:N}]=0$ and $\vmu^\star(\rvx_i)$ is fixed given $\rvx_{1:N}$.
Negating both sides gives \Cref{eq:standard_npe_cross_term}.
\end{proof}

\cfnperesidualmoment*

\begin{proof}
Fix $\rvx$ with $\E[\ervz_\ell^2\mid \rvx]<\infty$, which holds for almost every $\rvx$.
Adding and subtracting $\vmu^\star_\ell(\rvx)$ inside the square gives
\[
(\ervz_\ell-c_\ell)^2
=(\ervz_\ell-\vmu^\star_\ell(\rvx))^2
+2(\ervz_\ell-\vmu^\star_\ell(\rvx))(\vmu^\star_\ell(\rvx)-c_\ell)
+(\vmu^\star_\ell(\rvx)-c_\ell)^2.
\]
Taking the conditional expectation given $\rvx$, the first term becomes
$\mSigma^\star_{\ell\ell}(\rvx)$, the cross term vanishes because
$\E[\ervz_\ell-\vmu^\star_\ell(\rvx)\mid \rvx]=0$ and $\vc$ is fixed, and the last
term is unchanged. This is \Cref{eq:residual_moment_target}.
\end{proof}

\phantomsection\label{app:scale_calibration_proof}
\cfnpescale*

\begin{proof}
For $\alpha>0$, let $q_\alpha$ denote $q$ with every variance multiplied by
$\alpha^2$ and the mean unchanged. By assumption, $q_\alpha\in\mathcal Q$.
Write $\rvr(\rvx,\rvz):=\operatorname{diag}(\sigma_q(\rvx))^{-1}(\rvz-\vmu_q(\rvx))$.
Rescaling the variances gives
$\log q_\alpha(\rvz\mid \rvx)-\log q(\rvz\mid \rvx)
=-d\log\alpha+\tfrac12(1-\alpha^{-2})\|\rvr(\rvx,\rvz)\|^2$.
Taking expectations under $P_{\mathrm{train}}$ yields
\begin{equation}
\label{eq:scale_identity}
\E_{P_{\mathrm{train}}}[\log q_\alpha(\rvz\mid \rvx)]
-\E_{P_{\mathrm{train}}}[\log q(\rvz\mid \rvx)]
=-d\log\alpha+\tfrac d2\left(1-\alpha^{-2}\right)\chi^2_d(q;P_{\mathrm{train}}).
\end{equation}
Since $q$ maximizes the training log-likelihood over $\mathcal Q$, the derivative
of \Cref{eq:scale_identity} with respect to $\alpha$ vanishes at $\alpha=1$.
Thus $0=-d+d\,\chi^2_d(q;P_{\mathrm{train}})$, giving the result.
\end{proof}

\section{Synthetic Experiments}
\label{app:synthetic_experiments}

\subsection{Generative Models}
\label{app:synthetic_problems}

We study three problems: the Gaussian--Gaussian problem, the funnel problem, and the mixture problem.
Unless stated otherwise, every experiment uses $N = 10{,}000$ training simulations, a test set of $1{,}000$ simulations,
the Gaussian and NSF posterior approximations, 100 training epochs, the hyperparameters selected in
\Cref{app:synthetic_tuning}, and $K = 10$ folds for CF-NPE.
Each method is run with 10 seeds, and tables report the mean $\pm$ standard error over seeds.
Each later study states only the settings it changes.

\paragraph{Gaussian--Gaussian problem.}
This conjugate model is a standard benchmark in the simulation-based inference literature \citep[e.g.,][]{lueckmann2021benchmarking}, in which prior, likelihood, and posterior are all Gaussian.
It has $d = d_\rvx = 200$. The generative model is
\begin{align*}
\rvz &\sim \mathcal{N}(0, \mI_{200}), &
\rvx \mid \rvz &\sim \mathcal{N}(\rvz, 0.25\, \mI_{200}),
\end{align*}
giving posterior $\rvz \mid \rvx \sim \mathcal{N}(0.8\,\rvx, 0.2\, \mI_{200})$.

\paragraph{Funnel problem.}
The funnel problem has a non-Gaussian posterior whose conditional scale varies by orders of
magnitude.
It has $d = d_\rvx = 200$. The generative model is
\begin{align*}
\ervz_1 &\sim \mathcal{N}(0, 1), &
\ervz_j &\sim \mathcal{N}(0, 1) \quad (j \ge 2), \\
\ervx_1 \mid \rvz &\sim \mathcal{N}(\ervz_1, \sigma^2), &
\ervx_j \mid \rvz &\sim \mathcal{N}\!\big(\ervz_j\, e^{\ervz_1/2}, \sigma^2\big)
  \quad (j \ge 2),
\end{align*}
with $\sigma = 0.1$. Given $\ervz_1$, the coordinates $\ervz_j$ ($j \ge 2$) have independent Gaussian posteriors, so
the posterior reduces to a one-dimensional integral over $\ervz_1$, which we evaluate numerically on a grid. We report the
resulting Bayes-optimal test log-likelihood and posterior-mean MSE, averaged over $2 \times 10^4$ simulated pairs
(Monte Carlo standard errors below $0.01$), and the expected log-likelihood of the prior, $\E_{p(\rvz)}[\log p(\rvz)]$.

\paragraph{Mixture problem.}
The mixture problem has a two-component Gaussian observation model.
It has $d = 20$ and $d_\rvx = 2d = 40$. The generative model is
\begin{align*}
\rvz &\sim \mathcal{N}(0, \mI_{d}), &
\rvx \mid \rvz &\sim \tfrac{1}{2}\mathcal{N}(\mA\rvz +\vs, \sigma^2 \mI_{d_\rvx})
  + \tfrac{1}{2}\mathcal{N}(\mA\rvz -\vs, \sigma^2 \mI_{d_\rvx}),
\end{align*}
with entries $A_{ij}\stackrel{\mathrm{iid}}{\sim}\mathcal N(0,1/d)$ drawn once
and then fixed across methods and experimental seeds within each
dimension, mode separation $\|\vs\| = 3.0$, and $\sigma = 0.3$. Because the prior and each
likelihood component are Gaussian and linear in $\rvz$, the posterior is a two-component
Gaussian mixture available in closed form, so we report the Bayes-optimal test log-likelihood
as a reference. We also report the expected log-likelihood under the prior.
The posterior covariance is non-diagonal, which our diagonal Gaussian posterior approximation cannot represent.

\subsection{Hyperparameter Search Spaces and Selected Settings}
\label{app:synthetic_tuning}

\Cref{tab:synthetic_tuning_search} gives the shared hyperparameter search spaces.
\Cref{tab:gauss_d200_tuning_protocol,tab:funnel_d200_tuning_protocol,tab:mixture_d20_tuning_protocol} report the selected settings for the three problems.
Each problem has one CF-NPE stage-1 mean-network configuration shared by the
Gaussian and NSF families, and separately selected Std-NPE and CF-NPE stage-2
configurations for each family.

The fitted stage-1 networks are held fixed during stage-2 tuning.
Within each problem and approximating family, the CF-NPE configurations share the selected stage-1 and
stage-2 hyperparameters and differ only in fold count, replicate count, or whether stage~2 forms a
mixture (CF-NPE-mixture).

\newpage
\begin{table}[H]
\centering
\appendixtablestyle
\caption{Hyperparameter search spaces for the three problems.
Intervals marked log are sampled log-uniformly.}
\label{tab:synthetic_tuning_search}
\begin{tabular}{lll}
\toprule
hyperparameter & Std-NPE / CF-NPE stage~1 & CF-NPE stage~2 \\
\midrule
learning rate & $[10^{-4},10^{-2}]$ log & $[10^{-4},10^{-2}]$ log \\
weight decay & $[10^{-6},10^{-3}]$ log & $[10^{-6},10^{-3}]$ log \\
width & $\{64,128,256,512\}$ & $\{64,128,256\}$ \\
depth & $\{1,2,3\}$ & $\{1,2,3\}$ \\
dropout & $[0,0.5]$ & $[0,0.5]$ \\
batch size & $\{64,128,256\}$ & $\{64,128,256\}$ \\
coupling transforms (NSF) & $\{2,3,5\}$ & $\{2,3,5\}$ \\
spline bins (NSF) & $\{8,16\}$ & $\{8,16\}$ \\
tail bound (NSF) & $\{3,5,7\}$ & $\{3,5,7\}$ \\
\bottomrule
\end{tabular}
\end{table}

\begin{table}[p]
\centering
\appendixtablestyle
\caption{Selected hyperparameters for the Gaussian--Gaussian problem.}
\label{tab:gauss_d200_tuning_protocol}
\begin{tabular}{lrrrrr}
\toprule
& CF-NPE stage~1 & \multicolumn{2}{c}{Gaussian} & \multicolumn{2}{c}{NSF} \\
\cmidrule(lr){3-4}\cmidrule(lr){5-6}
hyperparameter & both families & Std-NPE & CF-NPE stage~2 & Std-NPE & CF-NPE stage~2 \\
\midrule
learning rate & $2.38\times10^{-3}$ & $7.53\times10^{-4}$ & $7.33\times10^{-4}$ & $2.86\times10^{-3}$ & $7.20\times10^{-3}$ \\
weight decay & $1.04\times10^{-5}$ & $3.44\times10^{-4}$ & $3.32\times10^{-5}$ & $2.50\times10^{-6}$ & $1.20\times10^{-4}$ \\
width & 256 & 256 & 128 & 256 & 128 \\
depth & 1 & 1 & 3 & 1 & 2 \\
dropout & $0.00100$ & $0.0151$ & $0.476$ & $0.0402$ & $0.418$ \\
batch size & 128 & 128 & 256 & 64 & 64 \\
coupling transforms & --- & --- & --- & 5 & 2 \\
spline bins & --- & --- & --- & 16 & 16 \\
tail bound & --- & --- & --- & 7 & 7 \\
\bottomrule
\end{tabular}

\vspace{24pt}
\caption{Selected hyperparameters for the funnel problem.}
\label{tab:funnel_d200_tuning_protocol}
\begin{tabular}{lrrrrr}
\toprule
& CF-NPE stage~1 & \multicolumn{2}{c}{Gaussian} & \multicolumn{2}{c}{NSF} \\
\cmidrule(lr){3-4}\cmidrule(lr){5-6}
hyperparameter & both families & Std-NPE & CF-NPE stage~2 & Std-NPE & CF-NPE stage~2 \\
\midrule
learning rate & $3.81\times10^{-3}$ & $4.70\times10^{-4}$ & $2.64\times10^{-4}$ & $2.02\times10^{-3}$ & $1.82\times10^{-3}$ \\
weight decay & $4.03\times10^{-5}$ & $8.96\times10^{-6}$ & $2.90\times10^{-5}$ & $5.80\times10^{-5}$ & $1.81\times10^{-5}$ \\
width & 512 & 512 & 256 & 256 & 128 \\
depth & 1 & 3 & 3 & 1 & 1 \\
dropout & $0.00440$ & $0.00423$ & $0.218$ & $0.00507$ & $0.477$ \\
batch size & 64 & 64 & 64 & 64 & 64 \\
coupling transforms & --- & --- & --- & 5 & 5 \\
spline bins & --- & --- & --- & 8 & 8 \\
tail bound & --- & --- & --- & 7 & 5 \\
\bottomrule
\end{tabular}

\vspace{24pt}
\caption{Selected hyperparameters for the mixture problem.}
\label{tab:mixture_d20_tuning_protocol}
\begin{tabular}{lrrrrr}
\toprule
& CF-NPE stage~1 & \multicolumn{2}{c}{Gaussian} & \multicolumn{2}{c}{NSF} \\
\cmidrule(lr){3-4}\cmidrule(lr){5-6}
hyperparameter & both families & Std-NPE & CF-NPE stage~2 & Std-NPE & CF-NPE stage~2 \\
\midrule
learning rate & $7.66\times10^{-3}$ & $9.88\times10^{-3}$ & $8.48\times10^{-4}$ & $9.05\times10^{-4}$ & $2.83\times10^{-3}$ \\
weight decay & $1.41\times10^{-4}$ & $1.15\times10^{-4}$ & $9.07\times10^{-4}$ & $1.55\times10^{-4}$ & $7.90\times10^{-4}$ \\
width & 256 & 64 & 64 & 64 & 256 \\
depth & 1 & 1 & 3 & 1 & 3 \\
dropout & $0.00115$ & $0.00950$ & $0.318$ & $0.0200$ & $0.450$ \\
batch size & 64 & 128 & 256 & 64 & 128 \\
coupling transforms & --- & --- & --- & 2 & 5 \\
spline bins & --- & --- & --- & 16 & 16 \\
tail bound & --- & --- & --- & 5 & 5 \\
\bottomrule
\end{tabular}
\end{table}

\subsection{Training Time}
\label{app:synthetic_runtime}

\begin{table}[H]
\centering
\appendixtablestyle
\caption{Training time in seconds for one seed, by
problem, method, and approximating family, timed after a GPU warm-up so that one-time
startup costs are excluded. CF-NPE rows show stage~1 $+$ stage~2 runtimes.
Timings were measured on one NVIDIA GeForce RTX~5090 GPU.}
\label{tab:synthetic_runtime}
\input{tables/tab_synthetic_runtime}
\end{table}

\subsection{Calibration Diagnostics}
\label{app:synthetic_calibration}

For $\chi^2_d$, we estimate the posterior mean and coordinate-wise variance from 500 posterior samples and average
the standardized squared errors over test pairs and coordinates. Simulation-based
calibration (SBC) \citep{talts2018validating} uses 256 posterior samples per test pair; the reported SBC-KS
is the Kolmogorov--Smirnov distance from uniformity of the normalized marginal ranks,
pooled over pairs and coordinates. TARP \citep{lemos2023sampling} evaluates joint coverage using 250 posterior
samples and 10 prior-drawn reference points per test pair; the reported TARP KS is the Kolmogorov--Smirnov distance from
uniformity of its coverage statistic, the largest gap between the empirical coverage
curve and the nominal-coverage diagonal. 
The classifier two-sample test (C2ST)~\citep{lopezpaz2017revisiting} simulates
10,000 new pairs $(\rvz,\rvx)$ and pairs each $\rvx$ with one draw from the posterior
approximation. A two-layer, width-256 classifier is trained for 30 epochs on half of
the pairs to tell simulated $\rvz$ from posterior draws and is scored on the other half. Its target accuracy is $0.5$. \Cref{tab:synthetic_calibration} reports the results.

\begin{table}[H]
\centering
\scriptsize
\setlength{\tabcolsep}{2.5pt}
\renewcommand{\arraystretch}{1.05}
\caption{Calibration diagnostics for the three problems by stage-1 replicate count.}
\label{tab:synthetic_calibration}
\input{tables/tab_synthetic_calibration}
\end{table}

\subsection{Cross-Fitting Ablations}
\label{app:gg_ablation}

On the Gaussian--Gaussian problem, we remove ingredients of CF-NPE ($M=3$) one at a time
(\Cref{tab:gg_ablation}).
The \emph{in-fold residuals} variant fits stage~2 to in-sample residuals: pairs in fold $k$ take their
mean from the networks of fold $k+1$ (modulo $K$), which are trained on them.
\emph{Single mean fit} replaces the folds and replicates with one stage-1 network, trained on
80\% of the training set and early-stopped on the other 20\%; stage~2 is fit to its
in-sample residuals on the same 80\%.

The in-fold residuals variant leaves the posterior mean unchanged but makes the posterior overconfident,
as \Cref{sec:theory} predicts. The single mean fit is also overconfident and
has higher posterior-mean MSE.

\begin{table}[H]
\centering
\appendixtablestyle
\caption{Cross-fitting ablations on the Gaussian--Gaussian problem, for each
approximating family. MSE standard errors are at most $0.001$. The CF-NPE, $M=3$ rows are the runs
reported in \Cref{tab:synthetic_gauss_d200}.}
\label{tab:gg_ablation}
\footnotesize
\input{tables/tab_gg_ablation}
\end{table}

\subsection{Fold Count \texorpdfstring{$K$}{K}}
\label{app:gg_kfold}

Increasing the number of folds $K$ grows the effective training set used for conditional mean estimation.
\Cref{tab:gg_kfold} reports that mean test log-likelihood rises with $K$, with diminishing gains as $K$ grows.
Even the smallest tested fold count achieves higher test log-likelihood than standard NPE.

\begin{table}[H]
\centering
\appendixtablestyle
\caption{Fold-count scaling for CF-NPE on the Gaussian--Gaussian problem
(Gaussian approximation, $M=3$): test log-likelihood vs.\ number of folds $K$.}
\label{tab:gg_kfold}
\input{tables/tab_gg_kfold}
\end{table}

\subsection{Oversized Stage-2 Flow Ablation}
\label{app:flow_size}

How sensitive is CF-NPE to the size of its stage-2 normalizing flow? On the
Gaussian--Gaussian problem with $M=3$, we replace the tuned
stage-2 NSF with three larger flows, keeping stage~1 fixed. \emph{+Width} increases the
width, the number of coupling transforms, and the number of spline bins, and lowers the
learning rate. \emph{+Depth} adds two hidden layers to +Width. \emph{$-$Regularization} removes
dropout and weight decay from +Depth. Each flow uses early stopping on validation
log-likelihood. \Cref{tab:flow_size} reports the varied hyperparameters
and results, using the diagnostics of \Cref{app:synthetic_calibration}.
These oversized flows lose a little test log-likelihood and have somewhat worse SBC, TARP, and C2ST diagnostics than
the tuned flow, but their $\chi^2_d$ stays close to one.

\begin{table}[H]
\centering
\appendixtablestyle
\caption{Stage-2 NSF configurations for CF-NPE ($M=3$) on the Gaussian--Gaussian
problem: the hyperparameters that differ across configurations (top), the
epoch selected by early stopping out of 100 (middle), and the calibration diagnostics
(bottom). The Tuned column's hyperparameters repeat
\Cref{tab:gauss_d200_tuning_protocol}, and its diagnostics repeat
\Cref{tab:synthetic_gauss_d200,tab:synthetic_calibration}.}
\label{tab:flow_size}
\input{tables/tab_flow_size}
\end{table}

\subsection{Training-Set-Size Scaling}
\label{app:gg_ntrain}

On the Gaussian--Gaussian problem, \Cref{tab:gg_ntrain} compares standard NPE and CF-NPE
across a variety of training-set sizes.
Standard NPE is tuned separately at each $N$, whereas CF-NPE reuses the hyperparameters selected at $N=10{,}000$.
CF-NPE has a higher mean test log-likelihood at every tested $N$.

\begin{table}[H]
\centering
\appendixtablestyle
\caption{Training-set-size scaling on the Gaussian--Gaussian problem
(Gaussian approximation): test log-likelihood vs.\ $N$ for standard NPE and CF-NPE ($M=3$).}
\label{tab:gg_ntrain}
\input{tables/tab_gg_ntrain}
\end{table}

\subsection{Dimension Sweeps}
\label{app:synthetic_dimsweep}

We vary the latent dimension $d$ of each problem, re-tuning standard NPE and CF-NPE stage~2 at each
dimension by the procedure of \Cref{app:synthetic_tuning}; CF-NPE stage~1 keeps the configuration
selected for the problem's main experiment
(\Crefrange{tab:gaussian_dimsweep}{tab:mixture_dimsweep}).

\begin{table}[H]
\centering
\appendixtablestyle
\caption{Dimension sweep for the Gaussian--Gaussian problem (Gaussian approximation):
test log-likelihood, the difference $\Delta$ in test log-likelihood between CF-NPE and
standard NPE (computed within each seed, then averaged), and the expected log-likelihoods of the prior
and the exact posterior.}
\label{tab:gaussian_dimsweep}
\input{tables/tab_gaussian_dimsweep}
\end{table}

\begin{table}[H]
\centering
\appendixtablestyle
\caption{Dimension sweep for the funnel problem (NSF approximation):
test log-likelihood, the difference $\Delta$ in test log-likelihood between CF-NPE and
standard NPE (computed within each seed, then averaged), and the expected log-likelihoods of the prior
and the exact posterior.}
\label{tab:funnel_dimsweep}
\input{tables/tab_funnel_dimsweep}
\end{table}

\begin{table}[H]
\centering
\appendixtablestyle
\caption{Dimension sweep for the mixture problem (NSF approximation):
test log-likelihood, the difference $\Delta$ in test log-likelihood between CF-NPE and
standard NPE (computed within each seed, then averaged), and the expected log-likelihoods of the prior
and the exact posterior.}
\label{tab:mixture_dimsweep}
\input{tables/tab_mixture_dimsweep}
\end{table}

\subsection{Temperature-Scaling Baseline}
\label{app:gg_temperature}

Adapting temperature scaling from classification~\citep{guo2017calibration}, we fit
a single scalar $T$ on $10{,}000$ newly simulated validation pairs after training and multiply every predicted
standard deviation by $T$.
On the Gaussian--Gaussian problem, temperature scaling brings
$\chi^2_d$ close to one but recovers only a small part of the test-log-likelihood
gap between standard NPE and CF-NPE (\Cref{tab:gg_temperature}). It leaves the
posterior-mean MSE unchanged by construction, so the MSE remains higher than CF-NPE's.

\begin{table}[H]
\centering
\appendixtablestyle
\caption{Temperature-scaling baseline on the Gaussian--Gaussian
problem: test log-likelihood, $\chi^2_d$, and posterior-mean MSE. MSE standard
errors are at most $0.001$. The Std-NPE and CF-NPE, $M=3$ rows are repeated from
\Cref{tab:synthetic_gauss_d200}.}
\label{tab:gg_temperature}
\input{tables/tab_gg_temperature}
\end{table}

\subsection{Alternative-Loss Baselines}
\label{app:gg_gradient}

Heteroscedastic-regression losses modify the Gaussian negative log-likelihood (NLL) so
that the predicted variance does not slow the fit of the mean, but they still fit the
variance to residuals of a mean trained on the same pairs. Does changing the loss alone
remove standard NPE's overconfidence? We train standard NPE with two such losses
(\Cref{tab:gg_gradient}). Both keep the Gaussian approximation and the hyperparameters
selected for standard NPE on the Gaussian--Gaussian problem.
\emph{$\beta$-NLL} with $\beta = 1$, following \citet{seitzer2022pitfalls}, multiplies each
coordinate's NLL by that coordinate's predicted variance, treated as a constant.
\emph{Stop-gradient}, adapted from \citet{stirn2023faithful}, adds an MSE loss for the mean
to a Gaussian NLL in which the mean is treated as a constant; the variance's gradients still
update the encoder that the mean and variance outputs share. $\beta$-NLL raises test
log-likelihood by about 0.9 nats, under a tenth of standard NPE's gap to CF-NPE, and neither loss
changes $\chi^2_d$ or posterior-mean MSE appreciably. In particular, both remain overconfident, like CF-NPE with
in-fold residuals (\Cref{app:gg_ablation}).

\begin{table}[H]
\centering
\appendixtablestyle
\caption{Alternative-loss baselines on the Gaussian--Gaussian problem:
test log-likelihood, $\chi^2_d$, and posterior-mean MSE. MSE standard errors
are at most $0.001$. The Std-NPE and CF-NPE, $M=3$ rows are
repeated from \Cref{tab:synthetic_gauss_d200}.}
\label{tab:gg_gradient}
\input{tables/tab_gg_gradient}
\end{table}

\subsection{Split-by-Fold Mixture Baseline}
\label{app:gg_split_by_fold}

Does validating each Std-NPE-mixture component on its own fold, as CF-NPE's stage-1
networks are validated, close the gap to CF-NPE? On the Gaussian--Gaussian problem,
\emph{split by fold} early-stops component $k$ of the $K'=10$ Std-NPE-mixture on fold $k$
and trains it on the other nine folds, instead of giving every component the same split
(\Cref{tab:gg_split_by_fold}). Splitting by fold raises the test log-likelihood of
Std-NPE-mixture, though not to the level of CF-NPE, and leaves its $\chi^2_d$ unchanged.

\begin{table}[H]
\centering
\appendixtablestyle
\caption{Split-by-fold mixture baseline on the Gaussian--Gaussian problem, for each
approximating family. MSE standard errors are at most $0.002$. The Std-NPE-mixture,
$K'=10$ and CF-NPE, $M=3$ rows are the runs reported in \Cref{tab:synthetic_gauss_d200}.}
\label{tab:gg_split_by_fold}
\footnotesize
\setlength{\tabcolsep}{2.5pt}
\input{tables/tab_gg_split_by_fold}
\end{table}

\section{Strong-Lensing Source Reconstruction: Implementation Details}
\label{app:exp:strong_lensing}

\paragraph{Forward simulator.}
The latent $\rvz$ is a synthetic $16{\times}16$ source-plane field on a $1''$
field of view ($d = 256$). The source field is drawn from a Gaussian
process on the source grid with a radial basis function kernel whose length scale is
one source pixel. The resulting pixels are spatially correlated, and each has a
unit-variance Gaussian marginal. The lens consists
of a singular isothermal ellipsoid~\citep{kormann1994isothermal} with
parameters $(\theta_E, e_1, e_2) \in [0.8, 1.2] \times [-0.25, 0.25]^2$
and an external shear $(\gamma_1, \gamma_2) \in [-0.1, 0.1]^2$, sampled
uniformly and independently per pair. Given the lens parameters, the image-plane value
at pixel $\theta$ is the source-field value at the deflected
position $\beta = \theta - \alpha(\theta)$, computed by bilinear
interpolation on the source grid using the closed-form SIE
deflection~\citep{birrer2018lenstronomy}. The image is rendered on a $64{\times}64$
grid with $0.05''$ pixels (a $3.2''$ field of view), convolved with a Gaussian
point-spread function with full width at half maximum $0.1''$, and corrupted with
Gaussian readout noise ($\sigma = 0.05$).

\paragraph{Networks.}
Every method receives only the lensed image; the lens parameters are not
provided. The image is transformed by $\operatorname{asinh}$ and processed by a residual
convolutional encoder with two stride-2 stages that reduce the spatial resolution
from $64{\times}64$ to the $16{\times}16$ source grid. Its base channels double
at each stage, and it uses group normalization and spatial dropout after each
residual stage. The CF-NPE stage-1
conditional-mean network maps this feature map to the predicted mean at the fixed
$16{\times}16$ resolution with $16$ residual blocks of width $256$, one
self-attention layer ($4$ heads), a sin/cos positional encoding, and extra
channels holding the image-wide average of each feature, ending in a layer that outputs
one mean per pixel.

Standard NPE and the CF-NPE stage-2 model both represent the source-grid density
with a coupling-based neural spline flow. Each coupling holds half of the source
pixels fixed under a spatial partition and transforms the rest with monotonic
rational-quadratic splines whose per-pixel parameters come from a conditioner
network: a residual network at the source-grid resolution that takes the fixed
half of the latent variables (placed at their grid positions, with zeros at the pixels being
transformed), the encoder
feature map, and a binary mask channel. Alternating couplings transform every
pixel.
The stage-2 model has its own encoder and models the out-of-fold residuals, each
divided by the standard deviation of its pixel's training residuals, conditional on
the image and the out-of-fold stage-1 mean. It passes the stage-1 mean through two
convolutions and concatenates the result with its own feature map on the source grid.
At inference, residual samples are multiplied back by those standard deviations and
added to the stage-1 mean.

\paragraph{Hyperparameter selection.}
Standard NPE is tuned by a $100$-trial tree-structured Parzen estimator study
selecting encoder and flow settings by held-out log-likelihood on a $10\%$
validation split at a fixed seed. CF-NPE stage~1 is tuned by held-out MSE
over the encoder, the layers after the encoder, and optimizer settings. CF-NPE stage~2 is tuned by a $100$-trial fixed-seed study on one fixed set of
trained $M=3$ stage-1 networks. It searches over learning rate, weight decay, and the
channel counts of the stage-2 encoder output and of the convolved stage-1 mean, selecting
by held-out log-likelihood of the standardized residuals. The flow architecture is held
at the standard-NPE selection and the encoder architecture at the stage-1 selection.
\Cref{tab:strong_lensing_tuning_protocol} lists the selected numerical settings.

\begin{table}[tbp]
\centering
\appendixtablestyle
\caption{Strong lensing ($d=256$): selected hyperparameters.
A dash denotes a setting that does not apply.}
\label{tab:strong_lensing_tuning_protocol}
\begin{tabular}{lrrr}
\toprule
hyperparameter & Std-NPE & CF-NPE stage~1 & CF-NPE stage~2 \\
\midrule
learning rate & $4.13\times10^{-4}$ & $1.38\times10^{-3}$ & $1.81\times10^{-3}$ \\
weight decay & $2.92\times10^{-4}$ & $1.00\times10^{-6}$ & $1.00\times10^{-5}$ \\
batch size & 128 & 64 & 128 \\
encoder base channels & 64 & 32 & 32 \\
encoder activation & GELU & SiLU & SiLU \\
encoder blocks per stage & 2 & 4 & 4 \\
encoder dropout & 0.1 & 0 & 0 \\
encoder stochastic depth & 0 & 0.2 & 0.2 \\
encoder group-norm groups & 32 & 8 & 8 \\
encoder squeeze-and-excitation & no & yes & yes \\
dropout after the encoder & --- & 0.3 & --- \\
coupling transforms & 3 & --- & 3 \\
spline bins & 8 & --- & 8 \\
tail bound & 5 & --- & 5 \\
coupling partition & column stripes & --- & column stripes \\
conditioner width & 64 & --- & 64 \\
conditioner blocks & 6 & --- & 6 \\
conditioner attention layers & 2 & --- & 2 \\
conditioner dropout & 0.1 & --- & 0.1 \\
conditioner stochastic depth & 0.1 & --- & 0.1 \\
stage-2 encoder channels & --- & --- & 64 \\
stage-1-mean channels & --- & --- & 8 \\
\bottomrule
\end{tabular}
\end{table}

\paragraph{Training and evaluation.}
Training uses $N = 10{,}000$ pairs for $100$ epochs with Adam. CF-NPE
uses $K = 10$ folds. All methods use 10 seeds and $1{,}000$ test pairs. Readout noise is resampled when examples are loaded. CF-NPE, $M=3$ and
CF-NPE-mixture, $M=3$ use the same trained stage-1 networks for each seed, so they
differ only in stage~2. Each member of the
$K'=50$ Std-NPE-mixture trains on its own random $90\%$ subsample of the training pairs. 
Runtimes were measured on a single NVIDIA GeForce RTX~5090 GPU.

\paragraph{Calibration.}
\Cref{tab:strong_lensing_calibration} reports the SBC and TARP diagnostics.

\begin{table}[tp]
\centering
\appendixtablestyle
\caption{Strong-lensing SBC and TARP diagnostics with an NSF approximation (mean $\pm$
standard error). C2ST is omitted because every method scores about $0.5$.
Log-likelihood, $\chi^2_d$, and
posterior-mean MSE for the same runs are in \Cref{tab:strong_lensing}.}
\label{tab:strong_lensing_calibration}
\input{tables/tab_strong_lensing_calibration}
\end{table}

%% file: tables/tab_synthetic_runtime.tex
\begin{tabular}{lrrrrrr}
\toprule
 & \multicolumn{2}{c}{Gaussian--Gaussian} & \multicolumn{2}{c}{Funnel} & \multicolumn{2}{c}{Mixture} \\ \cmidrule(lr){2-3} \cmidrule(lr){4-5} \cmidrule(lr){6-7}
Method & Gaussian & NSF & Gaussian & NSF & Gaussian & NSF \\
\midrule
Std-NPE & 2 & 83 & 9 & 131 & 8 & 22 \\
Std-NPE-mixture, $K'{=}10$ & 3 & 133 & 14 & 220 & 11 & 27 \\
Std-NPE-mixture, $K'{=}30$ & 3 & 344 & 27 & 565 & 12 & 66 \\
\midrule
CF-NPE, $M{=}1$ & $5{+}3$ & $5{+}75$ & $9{+}9$ & $9{+}135$ & $9{+}5$ & $9{+}72$ \\
CF-NPE, $M{=}3$ & $5{+}3$ & $5{+}68$ & $8{+}9$ & $8{+}134$ & $9{+}5$ & $9{+}72$ \\
CF-NPE, $M{=}10$ & $6{+}3$ & $6{+}67$ & $22{+}9$ & $22{+}136$ & $9{+}5$ & $9{+}71$ \\
CF-NPE-mixture, $M{=}3$ & $5{+}4$ & $5{+}143$ & $8{+}13$ & $8{+}196$ & $9{+}7$ & $9{+}93$ \\
\bottomrule
\end{tabular}

%% file: tables/tab_synthetic_calibration.tex
\begin{tabular}{l@{\hspace{0.8em}}ccc@{\hspace{0.8em}}ccc}
\toprule
& \multicolumn{3}{c}{Gaussian} & \multicolumn{3}{c}{NSF} \\ \cmidrule(lr){2-4} \cmidrule(lr){5-7}
Method & SBC-KS $\downarrow$ & TARP KS $\downarrow$ & C2ST ($\to 0.5$) & SBC-KS $\downarrow$ & TARP KS $\downarrow$ & C2ST ($\to 0.5$) \\
\midrule
\multicolumn{7}{l}{\textit{Gaussian--Gaussian problem}} \\
Std-NPE & $0.009 \pm 0.000$ & $0.239 \pm 0.003$ & $0.581 \pm 0.002$ & $0.005 \pm 0.001$ & $0.415 \pm 0.007$ & $0.827 \pm 0.024$ \\
\cmidrule(lr){1-7}
CF-NPE, $M{=}1$ & $\mathbf{0.002 \pm 0.000}$ & $\mathbf{0.017 \pm 0.002}$ & $0.557 \pm 0.002$ & $0.002 \pm 0.000$ & $0.022 \pm 0.002$ & $0.556 \pm 0.002$ \\
CF-NPE, $M{=}3$ & $\mathbf{0.002 \pm 0.000}$ & $0.106 \pm 0.004$ & $\mathbf{0.554 \pm 0.001}$ & $\mathbf{0.002 \pm 0.000}$ & $\mathbf{0.017 \pm 0.001}$ & $\mathbf{0.553 \pm 0.002}$ \\
\midrule
\multicolumn{7}{l}{\textit{Funnel problem}} \\
Std-NPE & $0.022 \pm 0.002$ & $\mathbf{0.311 \pm 0.006}$ & $0.872 \pm 0.004$ & $\mathbf{0.005 \pm 0.001}$ & $0.221 \pm 0.015$ & $0.973 \pm 0.002$ \\
\cmidrule(lr){1-7}
CF-NPE, $M{=}1$ & $\mathbf{0.005 \pm 0.001}$ & $0.347 \pm 0.004$ & $0.547 \pm 0.002$ & $0.012 \pm 0.001$ & $\mathbf{0.183 \pm 0.004}$ & $0.531 \pm 0.003$ \\
CF-NPE, $M{=}3$ & $\mathbf{0.005 \pm 0.001}$ & $0.450 \pm 0.004$ & $\mathbf{0.541 \pm 0.001}$ & $0.018 \pm 0.000$ & $0.203 \pm 0.005$ & $\mathbf{0.514 \pm 0.002}$ \\
\midrule
\multicolumn{7}{l}{\textit{Mixture problem}} \\
Std-NPE & $\mathbf{0.007 \pm 0.001}$ & $\mathbf{0.019 \pm 0.002}$ & $0.797 \pm 0.003$ & $0.023 \pm 0.001$ & $0.058 \pm 0.003$ & $0.688 \pm 0.003$ \\
\cmidrule(lr){1-7}
CF-NPE, $M{=}1$ & $\mathbf{0.007 \pm 0.001}$ & $0.023 \pm 0.002$ & $0.781 \pm 0.004$ & $\mathbf{0.007 \pm 0.001}$ & $\mathbf{0.024 \pm 0.004}$ & $\mathbf{0.530 \pm 0.002}$ \\
CF-NPE, $M{=}3$ & $\mathbf{0.008 \pm 0.001}$ & $\mathbf{0.017 \pm 0.002}$ & $\mathbf{0.764 \pm 0.005}$ & $0.012 \pm 0.002$ & $\mathbf{0.022 \pm 0.002}$ & $\mathbf{0.528 \pm 0.002}$ \\
\bottomrule
\end{tabular}

%% file: tables/tab_gg_ablation.tex
\begin{tabular}{lcccccc}
\toprule
Method & LL $\uparrow$ & $\chi^2_d$ ($\to 1$) & MSE $\downarrow$ & SBC-KS $\downarrow$ & TARP KS $\downarrow$ & C2ST ($\to 0.5$) \\
\midrule
\multicolumn{7}{l}{\textit{Gaussian}} \\
CF-NPE, $M{=}3$ & $-128.42 \pm 0.10$ & $0.993 \pm 0.002$ & $0.212$ & $0.002 \pm 0.000$ & $0.106 \pm 0.004$ & $0.554 \pm 0.001$ \\
in-fold residuals & $-129.45 \pm 0.12$ & $1.150 \pm 0.002$ & $0.212$ & $0.017 \pm 0.000$ & $0.240 \pm 0.004$ & $0.563 \pm 0.002$ \\
single mean fit & $-136.01 \pm 0.14$ & $1.094 \pm 0.009$ & $0.227$ & $0.011 \pm 0.001$ & $0.082 \pm 0.007$ & $0.566 \pm 0.001$ \\
\midrule
\multicolumn{7}{l}{\textit{NSF}} \\
CF-NPE, $M{=}3$ & $-128.31 \pm 0.10$ & $1.000 \pm 0.001$ & $0.212$ & $0.002 \pm 0.000$ & $0.017 \pm 0.001$ & $0.553 \pm 0.002$ \\
in-fold residuals & $-129.55 \pm 0.10$ & $1.164 \pm 0.001$ & $0.212$ & $0.019 \pm 0.000$ & $0.135 \pm 0.003$ & $0.560 \pm 0.001$ \\
single mean fit & $-136.91 \pm 0.11$ & $1.188 \pm 0.002$ & $0.227$ & $0.024 \pm 0.001$ & $0.114 \pm 0.007$ & $0.575 \pm 0.005$ \\
\bottomrule
\end{tabular}

%% file: tables/tab_gg_kfold.tex
\begin{tabular}{rr}
\toprule
$K$ & test log-likelihood \\
\midrule
2 & $-132.93 \pm 0.11$ \\
4 & $-129.50 \pm 0.10$ \\
8 & $-128.54 \pm 0.11$ \\
10 & $-128.42 \pm 0.10$ \\
16 & $-128.18 \pm 0.11$ \\
\bottomrule
\end{tabular}

%% file: tables/tab_flow_size.tex
\begin{tabular}{lcccc}
\toprule
 & Tuned & +Width & +Depth & $-$Regularization \\
\midrule
learning rate & $7.20\times10^{-3}$ & $1.00\times10^{-3}$ & $1.00\times10^{-3}$ & $1.00\times10^{-3}$ \\
weight decay & $1.20\times10^{-4}$ & $1.20\times10^{-4}$ & $1.20\times10^{-4}$ & 0 \\
width & 128 & 1024 & 1024 & 1024 \\
depth & 2 & 2 & 5 & 5 \\
dropout & $0.418$ & $0.418$ & $0.418$ & 0 \\
coupling transforms & 2 & 8 & 8 & 8 \\
spline bins & 16 & 32 & 32 & 32 \\
\midrule
best epoch & $87 \pm 4$ & $8 \pm 1$ & $7 \pm 1$ & $4 \pm 0$ \\
\midrule
LL $\uparrow$ & $-128.31 \pm 0.10$ & $-128.83 \pm 0.08$ & $-128.62 \pm 0.10$ & $-128.88 \pm 0.08$ \\
$\chi^2_d$ ($\to 1$) & $1.000 \pm 0.001$ & $0.994 \pm 0.005$ & $0.991 \pm 0.006$ & $0.994 \pm 0.002$ \\
SBC-KS $\downarrow$ & $0.002 \pm 0.000$ & $0.009 \pm 0.002$ & $0.007 \pm 0.001$ & $0.006 \pm 0.001$ \\
TARP KS $\downarrow$ & $0.017 \pm 0.001$ & $0.079 \pm 0.015$ & $0.039 \pm 0.007$ & $0.037 \pm 0.007$ \\
C2ST ($\to 0.5$) & $0.553 \pm 0.002$ & $0.588 \pm 0.005$ & $0.567 \pm 0.002$ & $0.578 \pm 0.005$ \\
\bottomrule
\end{tabular}

%% file: tables/tab_gg_ntrain.tex
\begin{tabular}{rrr}
\toprule
$N$ & Std-NPE & CF-NPE ($M{=}3$) \\
\midrule
$1{,}000$ & $-240.26 \pm 0.28$ & $-176.67 \pm 0.13$ \\
$3{,}000$ & $-181.89 \pm 0.16$ & $-142.58 \pm 0.10$ \\
$10{,}000$ & $-141.30 \pm 0.11$ & $-128.42 \pm 0.10$ \\
$30{,}000$ & $-129.37 \pm 0.10$ & $-125.00 \pm 0.07$ \\
$100{,}000$ & $-124.99 \pm 0.08$ & $-123.83 \pm 0.09$ \\
\bottomrule
\end{tabular}

%% file: tables/tab_gaussian_dimsweep.tex
\begin{tabular}{rccccc}
\toprule
$d$ & Std-NPE & CF-NPE & $\Delta$ (CF$-$std) & Prior & Posterior \\
\midrule
$2$ & $-1.24 \pm 0.01$ & $-1.24 \pm 0.01$ & $0.00 \pm 0.00$ & $-2.84$ & $-1.23$ \\
$10$ & $-6.23 \pm 0.01$ & $-6.22 \pm 0.02$ & $0.01 \pm 0.01$ & $-14.19$ & $-6.14$ \\
$50$ & $-31.77 \pm 0.07$ & $-31.98 \pm 0.06$ & $-0.21 \pm 0.03$ & $-70.95$ & $-30.71$ \\
$100$ & $-64.72 \pm 0.07$ & $-63.84 \pm 0.05$ & $0.88 \pm 0.04$ & $-141.89$ & $-61.42$ \\
$200$ & $-141.30 \pm 0.11$ & $-128.42 \pm 0.10$ & $12.88 \pm 0.07$ & $-283.79$ & $-122.84$ \\
\bottomrule
\end{tabular}

%% file: tables/tab_funnel_dimsweep.tex
\begin{tabular}{rccccc}
\toprule
$d$ & Std-NPE & CF-NPE & $\Delta$ (CF$-$std) & Prior & Posterior \\
\midrule
$2$ & $1.76 \pm 0.01$ & $1.76 \pm 0.01$ & $0.00 \pm 0.00$ & $-2.84$ & $1.78$ \\
$10$ & $6.55 \pm 0.03$ & $7.59 \pm 0.05$ & $1.04 \pm 0.03$ & $-14.19$ & $8.94$ \\
$20$ & $10.29 \pm 0.08$ & $13.96 \pm 0.08$ & $3.67 \pm 0.09$ & $-28.38$ & $17.87$ \\
$50$ & $25.79 \pm 0.50$ & $26.06 \pm 0.26$ & $0.27 \pm 0.51$ & $-70.95$ & $44.69$ \\
\bottomrule
\end{tabular}

%% file: tables/tab_mixture_dimsweep.tex
\begin{tabular}{rccccc}
\toprule
$d$ & Std-NPE & CF-NPE & $\Delta$ (CF$-$std) & Prior & Posterior \\
\midrule
$2$ & $0.52 \pm 0.01$ & $0.55 \pm 0.01$ & $0.03 \pm 0.00$ & $-2.84$ & $0.56$ \\
$10$ & $-1.19 \pm 0.02$ & $-0.36 \pm 0.03$ & $0.82 \pm 0.03$ & $-14.19$ & $-0.16$ \\
$20$ & $-2.99 \pm 0.03$ & $-0.63 \pm 0.03$ & $2.36 \pm 0.03$ & $-28.38$ & $0.07$ \\
$50$ & $-4.40 \pm 0.06$ & $-3.02 \pm 0.06$ & $1.38 \pm 0.09$ & $-70.95$ & $-0.14$ \\
\bottomrule
\end{tabular}

%% file: tables/tab_gg_temperature.tex
\begin{tabular}{lccc}
\toprule
Method & Test LL $\uparrow$ & $\chi^2_d$ $(\to 1)$ & MSE $\downarrow$ \\
\midrule
Std-NPE & $-141.30 \pm 0.11$ & $1.095 \pm 0.001$ & $0.234$ \\
Std-NPE + temperature scaling & $-140.88 \pm 0.11$ & $\mathbf{1.002 \pm 0.001}$ & $0.234$ \\
\midrule
CF-NPE, $M{=}3$ & $\mathbf{-128.42 \pm 0.10}$ & $0.993 \pm 0.002$ & $\mathbf{0.212}$ \\
\bottomrule
\end{tabular}

%% file: tables/tab_gg_gradient.tex
\begin{tabular}{lccc}
\toprule
Method & Test LL $\uparrow$ & $\chi^2_d$ $(\to 1)$ & MSE $\downarrow$ \\
\midrule
Std-NPE & $-141.30 \pm 0.11$ & $1.095 \pm 0.001$ & $0.234$ \\
Std-NPE + $\beta$-NLL & $-140.41 \pm 0.11$ & $1.082 \pm 0.001$ & $0.233$ \\
Std-NPE + stop-gradient & $-141.09 \pm 0.12$ & $1.090 \pm 0.001$ & $0.235$ \\
\midrule
CF-NPE, $M{=}3$ & $\mathbf{-128.42 \pm 0.10}$ & $\mathbf{0.993 \pm 0.002}$ & $\mathbf{0.212}$ \\
\bottomrule
\end{tabular}

%% file: tables/tab_gg_split_by_fold.tex
\begin{tabular}{lcccccc}
\toprule
Method & LL $\uparrow$ & $\chi^2_d$ ($\to 1$) & MSE $\downarrow$ & SBC-KS $\downarrow$ & TARP KS $\downarrow$ & C2ST ($\to 0.5$) \\
\midrule
\multicolumn{7}{l}{\textit{Gaussian}} \\
Std-NPE-mixture, $K'{=}10$ & $-134.79 \pm 0.12$ & $0.890 \pm 0.001$ & $0.212$ & $0.016 \pm 0.000$ & $0.238 \pm 0.002$ & $0.571 \pm 0.001$ \\
split by fold & $-133.34 \pm 0.11$ & $0.879 \pm 0.001$ & $0.211$ & $0.017 \pm 0.000$ & $0.233 \pm 0.003$ & $0.560 \pm 0.001$ \\
\cmidrule(lr){1-7}
CF-NPE, $M{=}3$ & $-128.42 \pm 0.10$ & $0.993 \pm 0.002$ & $0.212$ & $0.002 \pm 0.000$ & $0.106 \pm 0.004$ & $0.554 \pm 0.001$ \\
\midrule
\multicolumn{7}{l}{\textit{NSF}} \\
Std-NPE-mixture, $K'{=}10$ & $-159.24 \pm 0.27$ & $0.677 \pm 0.002$ & $0.251$ & $0.041 \pm 0.000$ & $0.417 \pm 0.003$ & $0.735 \pm 0.005$ \\
split by fold & $-154.88 \pm 0.29$ & $0.684 \pm 0.001$ & $0.246$ & $0.040 \pm 0.000$ & $0.436 \pm 0.004$ & $0.721 \pm 0.002$ \\
\cmidrule(lr){1-7}
CF-NPE, $M{=}3$ & $-128.31 \pm 0.10$ & $1.000 \pm 0.001$ & $0.212$ & $0.002 \pm 0.000$ & $0.017 \pm 0.001$ & $0.553 \pm 0.002$ \\
\bottomrule
\end{tabular}

%% file: tables/tab_strong_lensing_calibration.tex
\begin{tabular}{lcc}
\toprule
Method & SBC-KS $\downarrow$ & TARP KS $\downarrow$ \\
\midrule
Std-NPE & $0.0224 \pm 0.0006$ & $\mathbf{0.027 \pm 0.004}$ \\
Std-NPE-mixture, $K'{=}10$ & $0.0182 \pm 0.0014$ & $0.040 \pm 0.002$ \\
Std-NPE-mixture, $K'{=}50$ & $0.0233 \pm 0.0008$ & $0.054 \pm 0.002$ \\
\midrule
CF-NPE, $M{=}1$ & $0.0090 \pm 0.0009$ & $0.066 \pm 0.010$ \\
CF-NPE, $M{=}3$ & $0.0107 \pm 0.0016$ & $0.060 \pm 0.010$ \\
CF-NPE-mixture, $M{=}3$ & $\mathbf{0.0064 \pm 0.0007}$ & $0.046 \pm 0.007$ \\
\bottomrule
\end{tabular}